\newcommand{\R}{\mathbb{R}}
\newcommand{\1}{\mathbbm{1}}
\newcommand{\E}{\mathbb{E}}
\newcommand{\rvs}{\text{RCSL}\xspace}
\newtheorem{theorem}{Theorem}%[section]
\newtheorem{corollary}{Corollary}
\newtheorem{lemma}{Lemma}
\definecolor{color1}{RGB}{215,25,28}
\definecolor{color2}{RGB}{26,150,65}
\definecolor{color3}{RGB}{200,10,10}
\title{When does return-conditioned supervised learning work for offline reinforcement learning?}
\author{David Brandfonbrener\\
        New York University\\
        \texttt{david.brandfonbrener@nyu.edu} \And Alberto Bietti\\
        New York University
        \And Jacob Buckman \\
        MILA
        \AND Romain Laroche \\
        Microsoft Research
        \And Joan Bruna \\
        New York University}
\begin{document}

\maketitle

\begin{abstract}
    Several recent works have proposed a class of algorithms for the offline reinforcement learning (RL) problem that we will refer to as return-conditioned supervised learning (\rvs). \rvs algorithms learn the distribution of actions conditioned on both the state and the return of the trajectory. Then they define a policy by conditioning on achieving high return. In this paper, we provide a rigorous study of the capabilities and limitations of \rvs, something which is crucially missing in previous work. We find that \rvs returns the optimal policy under a set of assumptions that are stronger than those needed for the more traditional dynamic programming-based algorithms. We provide specific examples of MDPs and datasets that illustrate the necessity of these assumptions and the limits of \rvs. Finally, we present empirical evidence that these limitations will also cause issues in practice by providing illustrative experiments in simple point-mass environments and on datasets from the D4RL benchmark.  
\end{abstract}

\section{Introduction}

In recent years, deep learning has proven to be an exceptionally powerful generic algorithm for solving supervised learning (SL) tasks. These approaches tend to be stable, and scale well with compute and data \citep{kaplan2020scaling}. In contrast, deep reinforcement learning algorithms seem to lack these nice properties; results are well known to be sensitive to hyperparameters and difficult to replicate. In spite of this, deep reinforcement learning (RL) has achieved impressive feats, such as defeating human champions at Go \citep{Silver2016MasteringTG}. This juxtaposition of success and instability has inspired researchers to explore alternative approaches to reinforcement learning that more closely resemble supervised learning in hopes of making deep RL as well-behaved as deep SL.

One family of algorithms that has garnered great interest recently is return-conditioned supervised learning (\rvs). The core idea of \rvs is to learn the return-conditional distribution of actions in each state, and then define a policy by sampling from the distribution of actions that receive high return. This was first proposed for the online RL setting by work on Upside Down RL \citep{schmidhuber2019reinforcement, srivastava2019training} and Reward Conditioned Policies \citep{kumar2019reward}. The idea was extended to the offline RL setting using transformers that condition on the entire history of states rather than just the current Markovian state in the Decision Transformer (DT) work \citep{chen2021decision, furuta2021generalized}. Recent work on RL via Supervised Learning (RvS) \citep{emmons2021rvs} unifies and simplifies ideas from these prior works with ideas about goal-conditioned policies. 

Importantly, none of this prior work provides theoretical guarantees or analysis of the failure modes of the return-conditioning approach. In contrast, the more established dynamic programming (DP) algorithms for RL are better understood theoretically. This paper attempts to address this gap in understanding, in order to assess when \rvs is a reliable approach for offline RL. Specifically, we answer the following questions:
\begin{itemize}
    \item What optimality guarantees can we make for \rvs? Under what conditions are they necessary and sufficient?
    \item In what situations does \rvs fail in theory and in practice?
    \item How does \rvs relate to other approaches, such as DP and behavior cloning (BC)? %When is \rvs preferable? And vice versa?
\end{itemize}

We find that although \rvs does select a near-optimal policy under certain conditions, the necessary assumptions are more strict than those for DP. In particular, \rvs (but not DP) requires nearly deterministic dynamics in the MDP, knowledge of the proper value to condition on, and for the conditioning value to be supported by the distribution of returns in the dataset. We provide simple tabular examples to demonstrate the necessity of these assumptions.
The shortcomings of \rvs that we identify in theory are verified empirically with some simple experiments using neural models on ad-hoc example problems as well as benchmark datasets.
We conclude that \rvs alone is unlikely to be a general solution for offline RL problems, but does show promise in some specific situations such as deterministic MDPs with high-quality behavior data.
% We also compare \rvs to the baseline method of top-\% behavior cloning \citep{chen2021decision} and find that the two algorithms will behave similarly, but require different assumptions from each other.
% In Section \ref{sec:fail}, we identify (and provide didactic examples for) classes of MDPs which \rvs cannot solve, in some cases failing to identify the optimal policy even when given an infinite amount of exploratory data. T

\section{Preliminaries}

\subsection{Setup}\label{sec:setup}

We will consider an offline RL setup where we are given a dataset $ \mathcal{D}$ of trajectories $ \tau = (o_1, a_1, r_1, \cdots, o_H, a_H, r_H)$ of observations $ o_t \in \mathcal{O}$, actions $ a_t \in \mathcal{A}$, and rewards $ r_t \in [0,1]$ generated by some behavior policy $ \beta$ interacting with a finite horizon MDP with horizon $ H$. Let $ g(\tau) = \sum_{t=1}^H r_t$ denote the cumulative return of the trajectory (we will just use $ g $ when the trajectory is clear from context). And let $ J(\pi) = \E_{\tau \sim \pi}[g(\tau)]$ be the expected return of a policy $ \pi$.
We then let the state representation $ s_t \in \mathcal{S}$ be any function of the history of observations, actions, and rewards up to step $ t $ along with $ o_t$.
To simplify notation in the finite horizon setting, we will sometimes drop the timestep from $ s $ to refer to generic states and assume that we can access the timestep from the state representation as $ t(s)$. 
Let $ P_\pi$ denote the joint distribution over states, actions, rewards, and returns induced by any policy $ \pi$. 
%Define $ d_\pi(s) := \frac{1}{H}\sum_{t=1}^H P_\pi(s_t = s)$ denote the distribution over states.

In this paper, we focus on the \rvs approach that learns by return-conditioned supervised learning. Explicitly, at training time this method minimizes the empirical negative log likelihood loss:
\begin{align}
    \hat L(\pi) = - \sum_{\tau \in \mathcal{D}} \sum_{1 \leq t\leq H}  \log \pi(a_t|s_t, g(\tau)).
\end{align}
Then at test time, an algorithm takes the learned policy $ \pi$ along with a conditioning function $ f(s) $ to define the test-time policy $ \pi_f $ as:
\begin{align}
    \pi_f(a|s) := \pi(a|s, f(s)).
\end{align}
Nota bene: the Decision Transformer \citep{chen2021decision} is captured in this framework by defining the state space so that the state $ s_t$ at time $ t$ also contains all past $ o_{t'}$, $ a_{t'}$, and $ r_{t'}$ for $ t'< t$. In prior work, $ f $ is usually chosen to be a constant at the initial state and to decrease with observed reward along a trajectory, which is captured by a state representation that includes the history of rewards.

\subsection{The \rvs policy}\label{sec:policy}

To better understand the objective, it is useful to first consider its optimum in the case of infinite data. It is clear that our loss function attempts to learn $ P_\beta(a|s,g)$ where $\beta$ is the behavior policy that generated the data (and recall that $ P_\beta$ refers to the distribution over states, actions, and returns induced by $ \beta$). Factoring this distribution, we quickly see that the optimal policy $ \pi^{\rvs}_f$ for a specific conditioning function $ f $ can be written as:
\begin{align}
    \pi^{\rvs}_f(a|s) = P_\beta(a|s,f(s)) = \frac{P_\beta(a|s) P_\beta(f(s)|s,a)}{P_\beta(f(s)|s)} = \beta(a|s) \frac{P_\beta(f(s)|s,a)}{P_\beta(f(s)|s)}.
\end{align}
Essentially, the \rvs policy re-weights the behavior based on the distribution of future returns.

\paragraph{Connection to distributional RL.} In distributional RL \citep{bdr2022}, the distribution of future returns under a policy $ \pi$ from state $ s$ and action $ a $ is defined as: $G^\pi(s,a) \sim g = \sum_{t=t(s)}^H r_{t} \;\;|\;\; \tau \sim \pi, s_{t(s)} = s, a_{t(s)} = a$.
The \rvs policy is precisely proportional to the product of the behavior policy and the density of the distributional Q function of the behavior policy (i.e. $ P_\beta(g|s,a)$).
%Let $ \zeta^\pi(g|s,a) $ be the density of the random variable $ G^\pi(s,a)$. Then $ \pi_{\rvs}(a|s,g) \propto \beta(a|s) \zeta^\beta(g|s,a)$

\subsection{Related work}

As noted in the introduction, our work is in direct response to the recent line of literature on \rvs \citep{schmidhuber2019reinforcement, srivastava2019training, kumar2019reward, chen2021decision, furuta2021generalized, emmons2021rvs}. Specifically, we will focus on the DT \citep{chen2021decision} and RvS \citep{emmons2021rvs} formulations in our experiments since they also focus on the offline RL setting. Note that another recent work introduced the Trajectory Transformer \citep{janner2021offline} which does not fall under the \rvs umbrella since it performs planning in the learned model to define a policy.

Another relevant predecessor of \rvs comes from work on goal-based RL \citep{Kaelbling1993LearningTA}. Compared to \rvs, this line of work replaces the target return $ g $ in the empirical loss function by a goal state. One instantiation is hindsight experience replay (HER) where each trajectory in the replay buffer is relabeled as if the observed final state was in fact the goal state \citep{andrychowicz2017hindsight}. %By generalizing across goals this algorithm can solve goal-based tasks more efficiently. 
Another instance is goal-conditioned supervised learning \cite[GCSL,][]{ghosh2019learning}, which provides more careful analysis and guarantees, but the guarantees (1) are not transferable to the return-conditioned setting, (2) assume bounds on $ L_\infty$ errors in TV distance instead of dealing with expected loss functions that can be estimated from data, and (3) do not provide analysis of the tightness of the bounds.

Concurrent work \cite{strupl2022upsidedown, paster2022you, yang2022dichotomy} also all raise the issue of RCSL in stochastic environments with infinite data, and present some algorithmic solutions. However, none of this work addresses the potentially more fundamental issue of sample complexity that arises from the requirement of return coverage that we discuss in Section 4.

\section{When does \rvs find the optimal policy?}

We begin by exploring how \rvs behaves with infinite data and a fully expressive policy class. In this setting, classic DP algorithms (e.g. Q-learning) are guaranteed to converge to the optimal policy under coverage assumptions \cite{sutton2018reinforcement}. But we now show that this is not the case for \rvs, which requires additional assumptions for a similar guarantee.
Our approach is to first derive a positive result: under certain assumptions, the policy which optimizes the \rvs objective (Section \ref{sec:policy}) is guaranteed to be near-optimal. We then illustrate the limitations of \rvs by providing simple examples that are nonetheless challenging for these methods in order to demonstrate why our assumptions are necessary and that our bound is tight.

% Understanding this gap will be one of the main issues considered in this paper. 
% First, we will present a positive result that gives sufficient conditions to recover an optimal policy. We will extend these results to finite data in Section \ref{sec:finite}. But, then in Section \ref{sec:fail} we will show that when these sufficient conditions do not hold, we can get dramatically suboptimal performance. 

\begin{theorem}[Alignment with respect to the conditioning function]\label{thm:infinite}
Consider an MDP, behavior $ \beta$ and conditioning function $ f$. Assume the following:
\begin{enumerate}
    \item Return coverage: $ P_\beta(g=f(s_1)|s_1) \geq \alpha_f$ for all initial states $ s_1$.
    \item Near determinism: $ P(r \neq r(s, a) \text{ or } s' \neq T(s, a) | s,a ) \leq \epsilon$ at all $ s, a $ for some functions $ T$ and $ r $. Note that this does not constrain the stochasticity of the initial state.
    \item Consistency of $ f$: $ f(s) = f(s') + r$ for all $ s$.\footnote{Note this can be exactly enforced (as in prior work) by augmenting the state space to include the cumulative reward observed so far.}
\end{enumerate}
Then
\begin{align}
    \E_{s_1}[f(s_1)] - J(\pi_f^{\rvs}) \leq \epsilon\left( \frac{1}{\alpha_f} + 2\right)H^2.
\end{align}
Moreover, there exist problems where the bound is tight up to constant factors.
\end{theorem}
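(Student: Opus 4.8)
The plan is to prove the inequality by a backward recursion on the expected return-to-go of $\pi_f^{\rvs}$, and to handle tightness by an explicit chain MDP. Write $J_s(\pi_f^{\rvs})$ for the expected return-to-go of the \rvs policy started from state $s$, and define the per-state gap $e(s) := f(s) - J_s(\pi_f^{\rvs})$, so that the quantity to bound is $\E_{s_1}[e(s_1)]$. The two structural facts I would exploit are: (i) consistency of $f$, which guarantees that on the \emph{deterministic} transition $s' = T(s,a)$, $r = r(s,a)$ one has $f(s) - r(s,a) = f(s')$ exactly; and (ii) the factorization $\pi_f^{\rvs}(a|s) = \beta(a|s)\,P_\beta(f(s)\mid s,a)/P_\beta(f(s)\mid s)$ from Section~\ref{sec:policy}, together with near-determinism, to control how mass gets reweighted.

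First I would establish a one-step inequality. Expanding $J_s$ over the action $a \sim \pi_f^{\rvs}(\cdot|s)$ and the transition, and splitting the transition into its deterministic part (probability $\geq 1-\epsilon$) and its stochastic part, consistency collapses the deterministic part into $d(s,a)\,e(T(s,a))$ where $d(s,a) = P(r=r(s,a),\, s'=T(s,a)\mid s,a)$, while the stochastic part and the mismatch $f(s)(1-d(s,a))$ are each $O(\epsilon H)$ in magnitude (all rewards and returns-to-go lie in $[0,H]$). This yields
\begin{align}
    e(s) \le \sum_a \pi_f^{\rvs}(a|s)\, d(s,a)\, e(T(s,a)) + c\,\epsilon H
\end{align}
for an absolute constant $c$. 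Unrolling this recursion over the $H$ steps, the accumulated additive errors contribute $O(\epsilon H^2)$, and I am left with a boundary term $B := \E_\mu[f(s_{H+1})]$, where $\mu$ is the (sub-stochastic) measure obtained by sampling actions from $\pi_f^{\rvs}$ but forcing deterministic transitions, and $s_{H+1}$ is the terminal state.

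The crux is bounding $B$. On any deterministic rollout along which every sampled action is ``achievable,'' i.e.\ its deterministic successor still has positive return coverage for the residual target, consistency forces the residual $f(s_{H+1}) = 0$; hence $B \le H \cdot P_\mu(\exists t:\ \text{a non-achievable action is sampled})$. At a fixed state the probability of sampling such an action is at most $\epsilon / P_\beta(f(s)\mid s)$, since a non-achievable action carries $\rvs$-mass only through the $\le \epsilon$ stochastic contribution to $P_\beta(f(s)\mid s,a)$, divided by the normalizer $P_\beta(f(s)\mid s)$. Summing over the $H$ steps and over deterministic paths, the reweighting normalizers $P_\beta(f(s_t)\mid s_t)$ telescope against the path probabilities: the ratio $P_\beta(f(s_{t+1})\mid s_{t+1})/P_\beta(f(s_t)\mid s_t)$ produced at each step cancels, the residual behavior action-masses sum to at most $1$, and the only surviving factor is $1/P_\beta(f(s_1)\mid s_1) \le 1/\alpha_f$ from the initial step. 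This gives $B \le \epsilon H^2/\alpha_f$, and combining with the $O(\epsilon H^2)$ error recovers $\epsilon(1/\alpha_f + 2)H^2$. I expect this telescoping to be the main obstacle: return coverage is assumed only at initial states, so the whole difficulty is showing that the $1/P_\beta(f(s_t)\mid s_t)$ blow-ups at intermediate, possibly poorly covered states are exactly compensated by the small $\rvs$-probability of reaching them — this is precisely where near-determinism is used a second time, and where the higher-order stochastic corrections must be checked to be absorbable into the $O(\epsilon H^2)$ slack.

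Finally, for tightness I would build an explicit horizon-$H$ chain. The idea is an initial state with return coverage exactly $\alpha_f$ for the conditioning value, and at each subsequent state a near-deterministic transition with an $\epsilon$-probability ``slip'' to a high-reward branch, with rewards arranged so that the target $f(s_1)$ is achievable by $\beta$ only by chaining these rare slips. Conditioning on that target then forces $\pi_f^{\rvs}$ to chase returns it cannot realize deterministically; a direct computation of $\E_{s_1}[f(s_1)] - J(\pi_f^{\rvs})$ shows the gap scales as $\Omega(\epsilon H^2/\alpha_f)$, matching the upper bound up to constants. I would isolate the factors by a minimal construction that separately forces the $1/\alpha_f$ blow-up (few supported returns at the start) and the two powers of $H$ (one from per-step error accumulation, one from the $\le H$ cost of each mis-targeted trajectory).
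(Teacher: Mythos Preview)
Your proposal is essentially correct and rests on the same key mechanism as the paper's proof: the coverage normalizers $P_\beta(f(\bar s_t)\mid \bar s_t)$ telescope along deterministic paths, so that only the \emph{initial} normalizer $P_\beta(f(s_1)\mid s_1)\ge \alpha_f$ survives. The organization, however, differs. The paper does not split into a one-step recursion plus a boundary term; instead it directly expands $P_{\pi_f}(a_{1:H}\mid s_1)$ step by step, at each step (i) replacing the stochastic transition by the deterministic one (cost $\epsilon$), and (ii) replacing $P_\beta(f(\bar s_t)\mid \bar s_t,a_t)$ by $P_\beta(f(\bar s_{t+1})\mid \bar s_{t+1})$ (cost $\epsilon$), both divided by the fixed initial denominator $P_\beta(f(s_1)\mid s_1)$. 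This yields $P_{\pi_f}(a_{1:H}\mid s_1)\le \prod_t \beta(a_t\mid \bar s_t)\cdot \1[g(s_1,a_{1:H})=f(s_1)]/P_\beta(f(s_1)\mid s_1) + H\epsilon(1/\alpha_f+1)$, after which the indicator annihilates the contribution on bad paths.

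The practical advantage of the paper's incremental expansion is exactly at the point you flag as the main obstacle: by keeping the initial denominator fixed throughout the telescoping, every $\epsilon$ correction is divided by $\alpha_f$ rather than by an intermediate $P_\beta(f(\bar s_t)\mid \bar s_t)$ that may be tiny. In your formulation, expanding the $\mu$-probability as $\prod_k (\epsilon+q_{k+1})/q_k$ and multiplying out produces cross-terms like $\epsilon/q_k$ with uncontrolled $q_k$; these are \emph{not} higher-order in $\epsilon$ alone, and they are not ``absorbable into the $O(\epsilon H^2)$ slack'' as you suggest --- they belong to the $\epsilon H^2/\alpha_f$ term and are tamed precisely by doing the telescoping incrementally so the denominator never changes from $q_1$. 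Your boundary-term analysis can be rescued by adopting this incremental accounting, at which point it becomes the paper's computation.

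For tightness, the paper uses much simpler constructions than your horizon-$H$ chain: one-step (bandit) examples with a single $\epsilon$-stochastic reward suffice to show the $\epsilon/\alpha_f$ dependence is unimprovable up to constants. The $H^2$ factor is not separately witnessed there.
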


The proof is in Appendix \ref{app:infinite}. Note that the quantity $ \E_{s_1}[f(s_1)]$ is specific to the structure of \rvs algorithms and captures the notion that the ideal \rvs policy will be able to reproduce policies of any value when given different conditioning functions (with appropriate data). The theorem immediately yields the following corollaries (with proof in Appendix \ref{app:infinite}).

\begin{corollary}\label{cor:soft-infinite}
Under the assumptions of Theorem \ref{thm:infinite}, there exists a conditioning function $ f $ such that 
\begin{align}
    J(\pi^*) - J(\pi_f^{\rvs}) \leq \epsilon\left( \frac{1}{\alpha_f} + 3\right)H^2.
\end{align}
\end{corollary}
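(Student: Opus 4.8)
The plan is to derive the corollary directly from Theorem~\ref{thm:infinite} by exhibiting a single conditioning function $f$ whose average target $\E_{s_1}[f(s_1)]$ is within $\epsilon H^2$ of the optimal value $J(\pi^*)$. Writing
\[
J(\pi^*) - J(\pi_f^{\rvs}) = \big(J(\pi^*) - \E_{s_1}[f(s_1)]\big) + \big(\E_{s_1}[f(s_1)] - J(\pi_f^{\rvs})\big),
\]
Theorem~\ref{thm:infinite} already controls the second parenthesis by $\epsilon(1/\alpha_f + 2)H^2$ once $f$ satisfies the three assumptions, so the entire task reduces to choosing $f$ and bounding the first parenthesis by $\epsilon H^2$.

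For the construction, I would use the deterministic functions $T$ and $r$ furnished by the near-determinism assumption. Let $\bar M$ be the deterministic MDP with transition $T$ and reward $r$ (sharing the initial-state distribution of the true MDP), let $\bar\pi^*$ be an optimal policy of $\bar M$, and let $\bar V$ be its optimal value function. I would set $f(s_1) = \bar V(s_1)$, the deterministic (hence exactly achievable) optimal return from $s_1$, and extend $f$ to all reachable states through the cumulative-reward augmentation of the footnote, which makes the consistency condition $f(s) = f(s') + r$ hold by construction. Return coverage with coefficient $\alpha_f$ is exactly Assumption~1 applied to this $f$, so all hypotheses of Theorem~\ref{thm:infinite} are in force.

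The heart of the argument is then a perturbation estimate showing $J(\pi^*) \le \E_{s_1}[f(s_1)] + \epsilon H^2$. The key lemma I would establish is that for \emph{any} fixed policy $\pi$, its expected return in the true MDP and in $\bar M$ differ by at most $\epsilon H^2$: coupling the two trajectory distributions step by step, a union bound over the $H$ steps shows they disagree with probability at most $\epsilon H$, and conditioned on agreement the returns are identical, so the gap is at most $\epsilon H \cdot H$ since returns lie in $[0,H]$. Applying this to the true-optimal policy $\pi^*$ and using that $\bar V$ is optimal in $\bar M$, I obtain $J(\pi^*) = \E_{s_1}[V^{\pi^*}_{\mathrm{true}}(s_1)] \le \E_{s_1}[V^{\pi^*}_{\bar M}(s_1)] + \epsilon H^2 \le \E_{s_1}[\bar V(s_1)] + \epsilon H^2 = \E_{s_1}[f(s_1)] + \epsilon H^2$. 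Substituting into the decomposition and adding the two bounds yields $\epsilon(1/\alpha_f + 3)H^2$.

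The step I expect to be the main obstacle is the perturbation lemma: making the coupling between the true and deterministic trajectory distributions precise and ensuring the error truly accumulates as $\epsilon H^2$ rather than something larger, since a deviation at an early step places the process in an off-nominal state from which the subsequent comparison must be handled carefully. The remaining subtlety is purely bookkeeping: confirming that the single $f$ read off from $\bar M$ simultaneously satisfies all three assumptions — consistency is free from the augmentation, coverage is an assumed input, and near-determinism is a property of the MDP rather than of $f$ — which is routine once the lemma is in hand.
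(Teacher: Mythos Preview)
Your proposal is correct and follows essentially the same route as the paper: decompose $J(\pi^*)-J(\pi_f^{\rvs})$ into $(J(\pi^*)-\E_{s_1}[f(s_1)])+(\E_{s_1}[f(s_1)]-J(\pi_f^{\rvs}))$, bound the second term by Theorem~\ref{thm:infinite}, and control the first by $\epsilon H^2$ via a union-bound coupling between the true dynamics and the deterministic $(T,r)$. The only cosmetic difference is your choice of $f$: the paper sets $f(s_1)$ to be the return of the \emph{true} optimal policy $\pi^*$ when rolled out under the deterministic dynamics $(T,r)$, whereas you take $f(s_1)=\bar V(s_1)$, the optimal value of the deterministic MDP $\bar M$; your choice costs one extra (trivial) inequality $V^{\pi^*}_{\bar M}(s_1)\le \bar V(s_1)$, but both reach the same $\epsilon H^2$ bound, and your worry about post-deviation bookkeeping is unnecessary since after any deviation you simply bound the return gap by $H$.
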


\begin{corollary}\label{cor:hard-infinite}
If $ \alpha_f > 0$, $ \epsilon = 0$, and $ f(s_1) = V^*(s_1)$ for all initial states $s_1$, then $ J(\pi_f^{\rvs}) = J(\pi^*)$.
\end{corollary}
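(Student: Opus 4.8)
The plan is to derive the corollary as an immediate consequence of Theorem~\ref{thm:infinite} together with the elementary fact that the optimal value function upper-bounds the return of every policy. The corollary lives exactly at the boundary case $\epsilon = 0$, so rather than reprove anything I would verify that the three hypotheses of the theorem are in force and then read off what the bound says once its right-hand side collapses to zero.

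First I would confirm the assumptions. Return coverage holds because $\alpha_f > 0$ is assumed; this is also what makes the \rvs policy $\pi_f^{\rvs}(a|s) = P_\beta(a|s,f(s))$ well defined at the initial states, since then the event $\{g = f(s_1)\}$ carries positive behavior-probability. Near determinism holds with $\epsilon = 0$, i.e. the dynamics are exactly deterministic. Consistency of $f$ is taken from the ambient setup of the conditioning function (e.g. the return-to-go construction in the theorem's footnote); together with determinism it is what keeps the return-support positive along reachable trajectories, so that $\pi_f^{\rvs}$ remains well defined wherever it is rolled out. With all three hypotheses verified, Theorem~\ref{thm:infinite} applies.

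Next I would specialize the bound. Setting $\epsilon = 0$ makes its right-hand side vanish, yielding
\begin{align}
    \E_{s_1}[f(s_1)] - J(\pi_f^{\rvs}) \leq 0,
\end{align}
i.e. $J(\pi_f^{\rvs}) \geq \E_{s_1}[f(s_1)]$. Substituting the hypothesis $f(s_1) = V^*(s_1)$ and using the identity $\E_{s_1}[V^*(s_1)] = J(\pi^*)$ gives $J(\pi_f^{\rvs}) \geq J(\pi^*)$. For the matching inequality I would invoke optimality of $\pi^*$: no policy can beat the optimal expected return, so $J(\pi_f^{\rvs}) \leq J(\pi^*)$. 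The two bounds sandwich the quantity and force $J(\pi_f^{\rvs}) = J(\pi^*)$.

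There is no heavy computation here, as all the analytic work is carried by Theorem~\ref{thm:infinite}. The single point that deserves care, and which I view as the main obstacle, is making sure $\pi_f^{\rvs}$ is genuinely well defined under the stated hypotheses: its definition conditions on $\{g = f(s)\}$, so one must ensure this event keeps positive behavior-probability not only at $s_1$ (where $\alpha_f > 0$ supplies it) but at every visited state, which is precisely where determinism ($\epsilon = 0$) and the consistency relation $f(s) = f(s') + r$ enter. Once well-definedness is secured, the equality follows from the sandwich above.
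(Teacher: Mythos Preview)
Your proposal is correct and is precisely the intended argument: the paper does not spell out a separate proof for this corollary, treating it as an immediate consequence of Theorem~\ref{thm:infinite} at $\epsilon = 0$ combined with $\E_{s_1}[V^*(s_1)] = J(\pi^*)$ and the trivial upper bound $J(\pi_f^{\rvs}) \leq J(\pi^*)$. Your additional remark on well-definedness of $\pi_f^{\rvs}$ along the rollout is a fair clarification but not something the paper addresses explicitly.
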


The corollaries tell us that in near determinisitc environments with the proper conditioning functions and data coverage, it is possible for \rvs to recover near optimal policies. These assumptions are somewhat strong compared to those needed for DP-based approaches, so we will now explain why they are necessary for our analysis. 

% To get a better intuition for this result it is useful to consider why the assumptions are necessary. First, return coverage is necessary since \rvs operates at the trajectory level meaning that during training \rvs conditions on the return of entire trajectories. As such, to guarantee that we can generate some trajectory with return $ g$ we need $ g $ to be supported by the training distribution so that the learned model is well-defined. 
% Second, near determinism is necessary since \rvs conditions on stochastic returns instead of expected returns. If the environment is stochastic, then conditioning on the return value can lead to policies with stochastic returns that deviate substantially from the expected value. Since $ J(\pi_f)$ deals with expected returns, we assume near determinism to guarantee that expected returns are similar to stochastic returns. Note that similar assumptions of near-determinism have been made in other contexts in the RL literature, e.g. in \cite{efroni2022provably}. 
% Finally, \rvs requires consistency of $ f $ so that the conditioning remains well-defined (i.e. in-distribution) during rollouts of the \rvs policy. Of course it is possible to condition on inconsistent functions $ f$, but it is not clear when in general this would be a good idea.

\paragraph{Tightness.} To demonstrate tightness we will consider the simple examples in Figure \ref{fig:toy1}. These MDPs and behavior policies demonstrate tightness in $ \epsilon$ and $ \alpha_f$ up to constant factors, and provide insight into how stochastic dynamics lead to suboptimal behavior from \rvs algorithms. 

\begin{figure}[h]
    \centering
    \begin{subfigure}[t]{0.32\textwidth}
       \centering
       \includegraphics[width=0.8\textwidth]{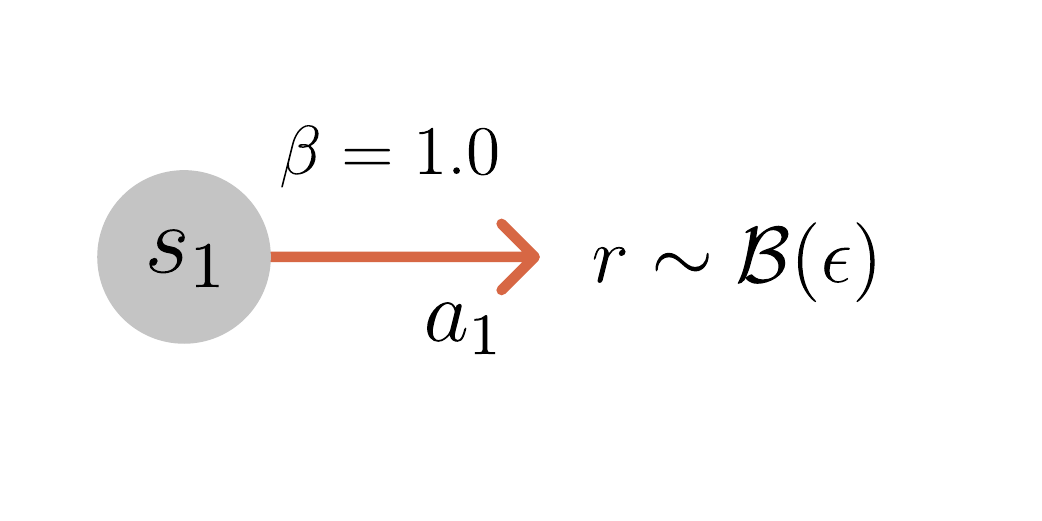}
        \caption{An example where the bound is tight. $ \mathcal{B}$ denotes the Bernoulli distribution.}
        \label{fig:onearm}
    \end{subfigure}
    \ \ 
        \begin{subfigure}[t]{0.32\textwidth}
       \centering
       \includegraphics[width=0.8\textwidth]{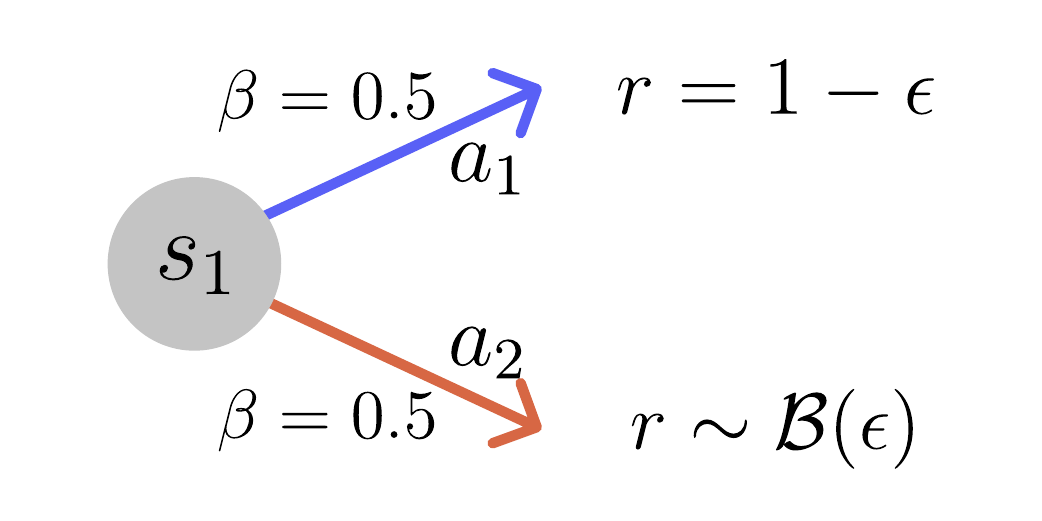}
        \caption{An example where \rvs also has large regret.}
        \label{fig:max}
    \end{subfigure}
    \ \ 
    \begin{subfigure}[t]{0.32\textwidth}
       \centering
       \includegraphics[width=0.8\textwidth]{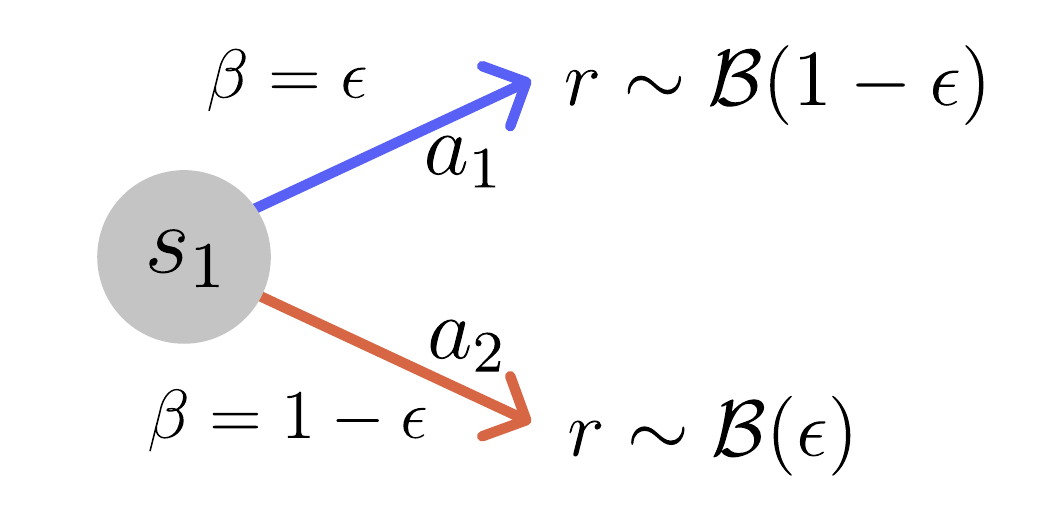}
        \caption{An example where \rvs also has large regret for any conditioning function.}
        \label{fig:bias}
    \end{subfigure}
    \caption{Failure modes of \rvs in stochastic environments with infinite data.}
    \label{fig:toy1}
\end{figure}

First, consider the example in Figure \ref{fig:onearm} with conditioning $ f(s_1) = 1$. There is only one possible policy in this case, and it has $ J(\pi) = \epsilon$ so that  $  \E[f(s_1)] - J(\pi) = 1 - \epsilon$. Note that $ \alpha_f = \epsilon$, so we have that $ \epsilon / \alpha_f = 1$. Thus, the bound is tight in $ \epsilon / \alpha_f$. This example shows that the goal of achieving a specific desired return is incompatible with stochastic environments.

This first example is somewhat silly since there is only one action, so the learned policy does not actually suffer any regret. To show that this issue can in fact lead to regret, consider the example in Figure \ref{fig:max}, again with conditioning $ f(s_1) = 1$. Then applying the reasoning from Section \ref{sec:policy},
\begin{align}
    \pi_f^\rvs({\color{blue}a_1}|s_1) &= \beta({\color{blue}a_1}|s_1) \frac{P_\beta(g=1|s_1, {\color{blue}a_1})}{P_\beta(g=1|s_1)} = 0.5 \cdot \frac{0}{0.5 \cdot \epsilon} = 0.
\end{align}
So we get that $ \E[f(s_1)] - J(\pi_f^\rvs) = 1 - \epsilon$, while $ \epsilon / \alpha_f  = \epsilon / (\epsilon / 2) = 2$ (which is on the same order, up to a constant factor). However, in this case the learned policy $ \pi_f^\rvs$ suffers substantial regret since the chosen action $ {\color{red}a_2}$ has substantially lower expected value than $ {\color{blue}a_1}$ by $ 1 - 2 \epsilon$.

The issue in the second example could be resolved by changing the conditioning function so that $ f(s_1) = 1 - \epsilon$. Now we will consider the example in Figure \ref{fig:bias} where we will see that there exist cases where the bias of \rvs in stochastic environments can remain regardless of the conditioning function. In this MDP, the only returns that are supported are $ g = 0$ or $ g = 1$. For $ f(s_1) = 1$, plugging in to the formula for $ \pi_f$ yields
\begin{align}
    \pi_f^\rvs({\color{blue}a_1}|s_1) = \beta({\color{blue}a_1}|s_1) \frac{P_\beta(g=1|s_1, {\color{blue}a_1})}{P_\beta(g=1|s_1)} = \epsilon \frac{1 - \epsilon}{\epsilon (1-\epsilon) + (1-\epsilon) \epsilon} = \frac{1}{2}.
\end{align}
Thus, $ \E[f(s_1)] - J(\pi_f^\rvs) = 1/2 $ and $ J(\pi^*) - J(\pi_f^\rvs) = 1/2 - \epsilon$. This shows that merely changing the conditioning function is not enough to overcome the bias of the \rvs method in stochastic environments.

These examples show that even for MDPs that are $ \epsilon$-close to being deterministic, the regret of \rvs can be large. But, in the special case of deterministic MDPs we find that \rvs can indeed recover the optimal policy. And note that we still allow for stochasticity in the initial states in these deterministic MDPs, which provides a rich setting for problems like robotics that requires generalization over the state space from finite data. In the next section, we will consider more precisely what happens to \rvs algorithms with finite data and limited model classes.

\paragraph{Trajectory stitching.} Another issue often discussed in the offline RL literature is the idea of trajectory stitching \citep{wang2020critic, chen2021decision}. Ideally, an offline RL agent can take suboptimal trajectories that overlap and stitch them into a better policy. Clearly, DP-based algorithms can do this, but it is not so clear that \rvs algorithms can. In Appendix \ref{app:stitch} we provide theoretical and empirical evidence that in fact they cannot perform stitching in general, even with infinite data. While this does not directly affect our bounds, the failure to perform stitching is an issue of practical importance for \rvs methods.

\section{Sample complexity of \rvs}\label{sec:finite}

Now that we have a better sense of what policy \rvs will converge to with infinite data, we can consider how quickly (and under what conditions) it will converge to the policy $ \pi_f$ when given finite data and a limited policy class, as will occur in practice.
We will do this via a reduction from the regret relative to the infinite data solution $ \pi_f$ to the expected loss function $L $ minimized at training time by \rvs, which is encoded in the following theorem.

\begin{theorem}[Reduction of \rvs to SL]\label{thm:finite}
Consider any function $ f: \mathcal{S} \to \R$ such that the following two assumptions hold: 
\begin{enumerate}
    \item Bounded occupancy mismatch: $\frac{P_{\pi_f^\rvs}(s)}{P_\beta(s)} \leq C_f$ for all $ s$.
    \item Return coverage: $P_{\beta}(g = f(s)|s) \geq \alpha_f $ for all $ s$.
\end{enumerate}
Define the expected loss as $L(\hat \pi) = \E_{s \sim P_\beta}\E_{g \sim P_\beta(\cdot|s)} [D_\mathrm{KL}(P_\beta(\cdot|s,g)\| \hat \pi(\cdot |s,g))]$. Then for any estimated \rvs policy $ \hat \pi $ that conditions on $ f $ at test time (denoted by $ \hat \pi_f$), we have that 
\begin{align}
    J(\pi_f^{\rvs}) - J(\hat \pi_f) \leq  \frac{C_f}{\alpha_f} H^2 \sqrt{2 L(\hat \pi)}. 
\end{align}
\end{theorem}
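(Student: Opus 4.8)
The plan is to run the standard imitation-learning reduction that turns a bound on expected supervised loss into a bound on control regret, with two twists specific to \rvs: a change of measure controlled by $C_f$, and an extraction of the error at the single conditioning value $g=f(s)$ out of the aggregate loss $L$, controlled by $\alpha_f$. Throughout I would treat $\pi_f^{\rvs}$ and $\hat\pi_f$ as Markov policies on the (augmented) state, since by definition $\pi_f^{\rvs}(\cdot\mid s)=P_\beta(\cdot\mid s,f(s))$ and $\hat\pi_f(\cdot\mid s)=\hat\pi(\cdot\mid s,f(s))$ with $f$ a deterministic function of the state; this legitimizes applying the performance-difference lemma.

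First I would invoke the performance-difference (simulation) lemma to write $J(\pi_f^{\rvs})-J(\hat\pi_f)=\sum_{t=1}^H \E_{s_t\sim P_{\pi_f^{\rvs}}}\langle \pi_f^{\rvs}(\cdot\mid s_t)-\hat\pi_f(\cdot\mid s_t),\,Q^{\hat\pi_f}(s_t,\cdot)\rangle$, expanding the gap against the $Q$-function of $\hat\pi_f$ but under the occupancy of the reference policy $\pi_f^{\rvs}$ (this choice of reference is what later matches the direction of the mismatch ratio $C_f$). Since rewards lie in $[0,1]$ the return-to-go satisfies $\|Q^{\hat\pi_f}\|_\infty\le H$, so bounding each inner product by $\|\pi_f^{\rvs}(\cdot\mid s_t)-\hat\pi_f(\cdot\mid s_t)\|_1\,\|Q^{\hat\pi_f}\|_\infty$ and collecting the two factors of $H$ (one from $\|Q\|_\infty$, one from the horizon sum) yields $J(\pi_f^{\rvs})-J(\hat\pi_f)\le 2H^2\,\E_{s\sim P_{\pi_f^{\rvs}}}[\mathrm{TV}(\pi_f^{\rvs}(\cdot\mid s),\hat\pi_f(\cdot\mid s))]$.

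Next I would change measure from the occupancy of $\pi_f^{\rvs}$ to that of $\beta$: since $\mathrm{TV}\ge 0$, Assumption 1 gives $\E_{s\sim P_{\pi_f^{\rvs}}}[\mathrm{TV}]\le C_f\,\E_{s\sim P_\beta}[\mathrm{TV}]$. The crucial \rvs-specific step is then to relate this expected TV at the conditioning value $f(s)$ to the aggregate loss $L(\hat\pi)$, which averages the KL over all $g\sim P_\beta(\cdot\mid s)$. Writing $L(\hat\pi)=\E_{s\sim P_\beta}\sum_g P_\beta(g\mid s)\,D_{\mathrm{KL}}(P_\beta(\cdot\mid s,g)\,\|\,\hat\pi(\cdot\mid s,g))$ and keeping only the $g=f(s)$ summand, Assumption 2 ($P_\beta(g=f(s)\mid s)\ge\alpha_f$) yields $\E_{s\sim P_\beta}[D_{\mathrm{KL}}(\pi_f^{\rvs}(\cdot\mid s)\,\|\,\hat\pi_f(\cdot\mid s))]\le L(\hat\pi)/\alpha_f$. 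Finally, Pinsker's inequality followed by Jensen converts KL to TV in expectation, $\E_{s\sim P_\beta}[\mathrm{TV}]\le\E_{s\sim P_\beta}\bigl[\sqrt{\tfrac12 D_{\mathrm{KL}}}\bigr]\le\sqrt{\tfrac12\,\E_{s\sim P_\beta}[D_{\mathrm{KL}}]}\le\sqrt{L(\hat\pi)/(2\alpha_f)}$. Chaining these gives $J(\pi_f^{\rvs})-J(\hat\pi_f)\le 2H^2 C_f\sqrt{L(\hat\pi)/(2\alpha_f)}=H^2 C_f\sqrt{2L(\hat\pi)/\alpha_f}$, and since $\alpha_f\le 1$ implies $\sqrt{1/\alpha_f}\le 1/\alpha_f$, this is at most $\frac{C_f}{\alpha_f}H^2\sqrt{2L(\hat\pi)}$, the claimed bound.

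I expect the main obstacle to be this third step, the clean extraction of the per-conditioning error from $L$: one must be careful that the loss is an expectation over the entire return distribution, that the KL appears in the direction $D_{\mathrm{KL}}(P_\beta\,\|\,\hat\pi)$ needed for Pinsker to control $\mathrm{TV}(P_\beta,\hat\pi)$, and that discarding the non-$f(s)$ summands is lossless only because KL is nonnegative, so it is precisely the return-coverage lower bound $\alpha_f$ that keeps the relevant term from being washed out. A secondary point worth double-checking is the bookkeeping of the two factors of $H$ and the confirmation that the reference occupancy in the performance-difference expansion is indeed that of $\pi_f^{\rvs}$, so that the mismatch ratio $C_f=P_{\pi_f^{\rvs}}/P_\beta$ enters with the correct orientation.
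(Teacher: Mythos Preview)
Your argument is correct and follows essentially the same route as the paper: both obtain $J(\pi_f^{\rvs})-J(\hat\pi_f)\le 2H^2\,\E_{s\sim P_{\pi_f^{\rvs}}}[\mathrm{TV}(\pi_f^{\rvs}(\cdot\mid s),\hat\pi_f(\cdot\mid s))]$, change measure to $P_\beta$ via $C_f$, exploit return coverage $\alpha_f$, and finish with Pinsker plus Jensen. The only differences are cosmetic: the paper derives the $2H^2$ factor from an occupancy-$\ell_1$ lemma ($\|d_\pi-d_{\pi'}\|_1\le 2H\,\E_{s\sim d_\pi}[\mathrm{TV}]$) rather than the $Q$-based performance-difference identity, and it applies $\alpha_f$ at the TV level---multiplying and dividing by $P_\beta(f(s)\mid s)$ and then re-embedding the single $g=f(s)$ term into the full $g$-average \emph{before} invoking Pinsker---so that $1/\alpha_f$ sits outside the square root from the start; your ordering in fact yields the slightly sharper intermediate bound $C_f H^2\sqrt{2L(\hat\pi)/\alpha_f}$ before you relax $\sqrt{1/\alpha_f}\le 1/\alpha_f$.
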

The proof can be found in Appendix \ref{app:finite}. 
Note that we require a similar assumption of return coverage as before to ensure we have sufficient data to define $ \pi_f$. We also require an assumption on the state occupancy of the idealized policy $ \pi_f$ relative to $ \beta$. This assumption is needed since the loss $ L(\hat \pi)$ is optimized on states sampled from $ P_\beta$, but we care about the expected return of the learned policy relative to that of $ \pi_f$, which can be written as an expectation over states sampled from $ P_{\pi_f}$. 

This gives us a reduction to supervised learning, but to take this all the way to a sample complexity bound we need to control the loss~$L(\hat \pi)$ from finite samples. Letting $ N $ denote the size of the dataset, the following corollary uses standard uniform convergence results from supervised learning~\cite{shalev2014understanding} to yield finite sample bounds.

% \begin{corollary}[Comparing with deterministic policies]
% If we add the assumptions that (3) the MDP is deterministic, and (4) $ f = V^\pi$ for some deterministic policy $ \pi$.
% Then
% \begin{align}
%     J(\pi) - J(\hat \pi_f) \leq 2 \frac{C_f}{\alpha_f} H^2 \cdot L(\hat \pi)
% \end{align}
% In particular if $ f = V^*$ we can replace $ \pi$ by $ \pi^*$ above. 
% \end{corollary}

\begin{corollary}[Sample complexity of \rvs]\label{cor:finite}
To get finite data guarantees, add to the above assumptions the assumptions that (1) the policy class $ \Pi$ is finite, (2) $|\log \pi(a|s,g) - \log \pi(a'|s', g')| \leq c$ for any~$(a, s, g, a', s', g')$ and all~$\pi \in \Pi$, and (3) the approximation error of $ \Pi$ is bounded by $ \epsilon_{approx}$, i.e. $ \min_{\pi\in \Pi}L(\pi) \leq \epsilon_{approx}$. Then with probability at least $ 1-\delta$,
\begin{align}
    J(\pi_f^{\rvs}) - J(\hat \pi_f) \leq O \left ( \frac{C_f}{\alpha_f}H^2\left( \sqrt{c}\left(\frac{\log |\Pi| /\delta}{N}\right)^{1/4} + \sqrt{\epsilon_{approx}} \right)\right). 
\end{align}
\end{corollary}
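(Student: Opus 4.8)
The plan is to treat Corollary \ref{cor:finite} as a finite-sample instantiation of the reduction in Theorem \ref{thm:finite}. Since that theorem already controls the regret by $J(\pi_f^{\rvs}) - J(\hat\pi_f) \leq \frac{C_f}{\alpha_f}H^2\sqrt{2L(\hat\pi)}$, the only remaining work is to bound the expected supervised loss $L(\hat\pi)$ of the empirical minimizer $\hat\pi$ with high probability; the stated bound then follows by substitution. The characteristic fourth-root rate is, in fact, nothing more than the square root of a standard $1/\sqrt{N}$ supervised-learning rate, forced upon us by the $\sqrt{L(\hat\pi)}$ appearing in Theorem \ref{thm:finite}.

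First I would reconcile $L$ with the training objective. Writing $D_\mathrm{KL}(P_\beta(\cdot|s,g)\|\pi(\cdot|s,g)) = \E_{a\sim P_\beta(\cdot|s,g)}[\log P_\beta(a|s,g)] - \E_{a\sim P_\beta(\cdot|s,g)}[\log\pi(a|s,g)]$, the first term is the (negative) conditional entropy of $P_\beta$ and is independent of $\pi$. Hence minimizing $L$ over $\Pi$ is equivalent to minimizing the cross-entropy, and the negative-log-likelihood objective minimized at training time (Section \ref{sec:policy}) is, up to this same $\pi$-independent constant and a normalization, an unbiased estimate of $L$; denote its empirical average by $\hat L$. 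In particular $\hat\pi$ is the empirical risk minimizer of $\hat L$, and the best-in-class policy $\pi^* = \arg\min_{\pi\in\Pi}L(\pi)$ satisfies $L(\pi^*)\leq\epsilon_{approx}$ by assumption (3).

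Next I would invoke a standard finite-class uniform convergence argument. Viewing each datum as an i.i.d.\ draw $(s,a,g)$ from the behavior distribution, the per-sample log-likelihood $\log\pi(a|s,g)$ varies over a range of at most $c$ by assumption (2), so Hoeffding's inequality together with a union bound over the finite class $\Pi$ gives, with probability at least $1-\delta$, $\sup_{\pi\in\Pi}|L(\pi)-\hat L(\pi)| \leq c\sqrt{\tfrac{\log(2|\Pi|/\delta)}{2N}}$. The usual excess-risk decomposition then yields
\begin{align}
L(\hat\pi) \leq \hat L(\hat\pi) + \sup_{\pi}|L-\hat L| \leq \hat L(\pi^*) + \sup_{\pi}|L-\hat L| \leq L(\pi^*) + 2\sup_{\pi}|L-\hat L|,
\end{align}
where the middle step uses optimality of $\hat\pi$ for $\hat L$, so that $L(\hat\pi) \leq \epsilon_{approx} + O\!\left(c\sqrt{\log(|\Pi|/\delta)/N}\right)$. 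Feeding this back through Theorem \ref{thm:finite} and using subadditivity $\sqrt{a+b}\leq\sqrt{a}+\sqrt{b}$ to split $\sqrt{L(\hat\pi)} \leq \sqrt{\epsilon_{approx}} + O\!\left(\sqrt{c}\,(\log(|\Pi|/\delta)/N)^{1/4}\right)$, then multiplying by $\frac{C_f}{\alpha_f}H^2\sqrt{2}$, produces exactly the claimed bound.

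I expect the main difficulty to be bookkeeping rather than anything conceptual. The one genuine subtlety is that uniform convergence must be run on the cross-entropy, whose range is controlled by $c$ through assumption (2), rather than on $L$ itself, whose integrand contains a potentially unbounded $\log P_\beta$ term; this is legitimate precisely because the reduction only ever sees \emph{differences} of the loss across $\pi\in\Pi$, and those differences are exactly the bounded $\log\pi$ quantities. A secondary issue is the statistical dependence among the $H$ per-timestep samples drawn from a single trajectory: the cleanest route is to treat the data as i.i.d.\ per-timestep $(s,a,g)$ samples so that $N$ (rather than $NH$) plays the role of the effective sample size, matching the $1/N$ in the statement, with the factor $H^2$ entering solely through Theorem \ref{thm:finite}.
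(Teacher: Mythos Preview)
Your proposal is correct and matches the paper's proof almost exactly: the paper also rewrites $L(\pi)=\bar L(\pi)-H_\beta$ with $\bar L$ the cross-entropy, decomposes $L(\hat\pi)\leq \bar L(\hat\pi)-\bar L(\pi^\dagger)+\epsilon_{approx}\leq 2\sup_{\pi\in\Pi}|\bar L(\pi)-\hat L(\pi)|+\epsilon_{approx}$, and then applies a concentration inequality (McDiarmid rather than Hoeffding, which here amounts to the same thing) with a union bound over $\Pi$. Your closing remark about running uniform convergence on the bounded cross-entropy rather than on $L$ is exactly the point, and the paper simply bakes this into the decomposition by passing to $\bar L$ from the start.
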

% \answerTODO{}: do we need to add an assumption about bounded log likelihoods? What is the best reference here for the SL result? (TODO Alberto)

The proof is in Appendix \ref{cor:finite_proof}.
Analyzing the bound, we can see that the dependence on $ N $ is in terms of a fourth root rather than the square root, but this comes from the fact that we are optimizing a surrogate loss. Namely the learner optimizes KL divergence, but we ultimately care about regret which we access by using the KL term to bound a TV divergence and thus lose a square root factor.
A similar rate appears, for example, when bounding 0-1 classification loss of logistic regression~\cite{bartlett2006convexity,boucheron2005theory}.

This corollary also tells us something about how the learner will learn to generalize across different values of the return. If the policy class is small (for some notion of model complexity) and sufficiently structured, then it can use information from the given data to generalize across values of $ g$, using low-return trajectories to inform the model on high-return trajectories. 

Note that a full sample complexity bound that competes with the optimal policy can be derived by combining this result with Corollary \ref{cor:soft-infinite} as follows:
\begin{corollary}[Sample complexity against the optimal policy]\label{cor:combined}
Under all of the assumptions of Corollary \ref{cor:soft-infinite} and Corollary \ref{cor:finite} we get:
\begin{align}
    J(\pi^*) - J(\hat \pi_f) \leq O \left ( \frac{C_f}{\alpha_f}H^2\left( \sqrt{c} \left(\frac{\log |\Pi| /\delta}{N}\right)^{1/4} + \sqrt{\epsilon_{approx}} \right) + \frac{\epsilon}{\alpha_f} H^2\right). 
\end{align}
\end{corollary}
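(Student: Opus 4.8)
The plan is to obtain this bound by a single triangle-inequality decomposition of the suboptimality gap, splitting it into an infinite-data (bias) term controlled by Corollary \ref{cor:soft-infinite} and a finite-data (estimation) term controlled by Corollary \ref{cor:finite}. Concretely, I would fix the conditioning function $f$ whose existence is guaranteed by Corollary \ref{cor:soft-infinite} (the one for which the idealized \rvs policy $\pi_f^{\rvs}$ is near-optimal), and then write the gap against the optimal policy as a telescoping sum through $\pi_f^{\rvs}$:
\begin{align}
    J(\pi^*) - J(\hat \pi_f) = \big(J(\pi^*) - J(\pi_f^{\rvs})\big) + \big(J(\pi_f^{\rvs}) - J(\hat \pi_f)\big).
\end{align}

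First I would bound the first parenthesized term directly by Corollary \ref{cor:soft-infinite}, which gives $J(\pi^*) - J(\pi_f^{\rvs}) \leq \epsilon(1/\alpha_f + 3)H^2$. Since $\alpha_f \leq 1$ (it lower-bounds a probability), we have $1/\alpha_f \geq 1$, so $1/\alpha_f + 3 = O(1/\alpha_f)$ and this term is $O\big(\tfrac{\epsilon}{\alpha_f}H^2\big)$, which is exactly the last summand of the claimed bound. Next I would bound the second parenthesized term by Corollary \ref{cor:finite}, which, with probability at least $1-\delta$, gives the estimation term $O\big(\tfrac{C_f}{\alpha_f}H^2(\sqrt{c}(\log|\Pi|/\delta / N)^{1/4} + \sqrt{\epsilon_{approx}})\big)$. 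Summing the two bounds yields the stated inequality; and since the bias term is deterministic while only the estimation term carries the failure probability, the combined statement holds with probability at least $1-\delta$.

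The only thing that genuinely requires care—and it is bookkeeping rather than a real obstacle—is the consistency of the conditioning function across the two results: the $f$ produced by Corollary \ref{cor:soft-infinite} must be the very same $f$ to which the return-coverage and bounded-occupancy-mismatch assumptions of Corollary \ref{cor:finite} are applied, so that a single $\alpha_f$ and $C_f$ appear in both terms. This is precisely what the hypothesis ``under all of the assumptions of Corollary \ref{cor:soft-infinite} and Corollary \ref{cor:finite}'' supplies, and the reconciliation is clean because the return-coverage constant from the infinite-data result is stated only over initial states, whereas the finite-data result assumes the strictly stronger all-states coverage $P_\beta(g=f(s)|s) \geq \alpha_f$; the latter subsumes the former with the same $\alpha_f$. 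With that noted, the argument reduces to the one-line decomposition above followed by two invocations of already-established results, so I would expect no further technical difficulty.
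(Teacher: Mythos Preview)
Your proposal is correct and matches the paper's intended argument: the paper does not give a standalone proof of this corollary but presents it as the immediate combination of Corollary~\ref{cor:soft-infinite} and Corollary~\ref{cor:finite}, which is exactly the triangle-inequality decomposition $J(\pi^*) - J(\hat\pi_f) = (J(\pi^*) - J(\pi_f^{\rvs})) + (J(\pi_f^{\rvs}) - J(\hat\pi_f))$ you describe. Your remark about fixing the existentially-quantified $f$ from Corollary~\ref{cor:soft-infinite} and then invoking the assumptions of Corollary~\ref{cor:finite} for that same $f$ is the right bookkeeping, and your observation that the all-states return coverage subsumes the initial-state coverage with the same $\alpha_f$ is a useful clarification that the paper leaves implicit.
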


\paragraph{Tightness.} To better understand why the dependence on $ 1/\alpha_f$ is tight and potentially exponential in the horizon $ H$, even in deterministic environments, we offer the example in Figure \ref{fig:loop}. Specifically, we claim that any value of $ f(s_1)$ where the policy $ \pi_f^\rvs$ prefers the good action $ {\color{blue}a_1} $ from $ s_1$ will require on the order of $ 10^{H/2}$ samples in expectation to recover as $ \hat \pi_f$\footnote{Except for $ f(s_1) = 0$, which will yield a policy substantially worse than the behavior.}. 

\begin{wrapfigure}[15]{r}{0.4\textwidth}
    %\vspace{-0.5cm}
       \centering
       \includegraphics[width=0.3\textwidth]{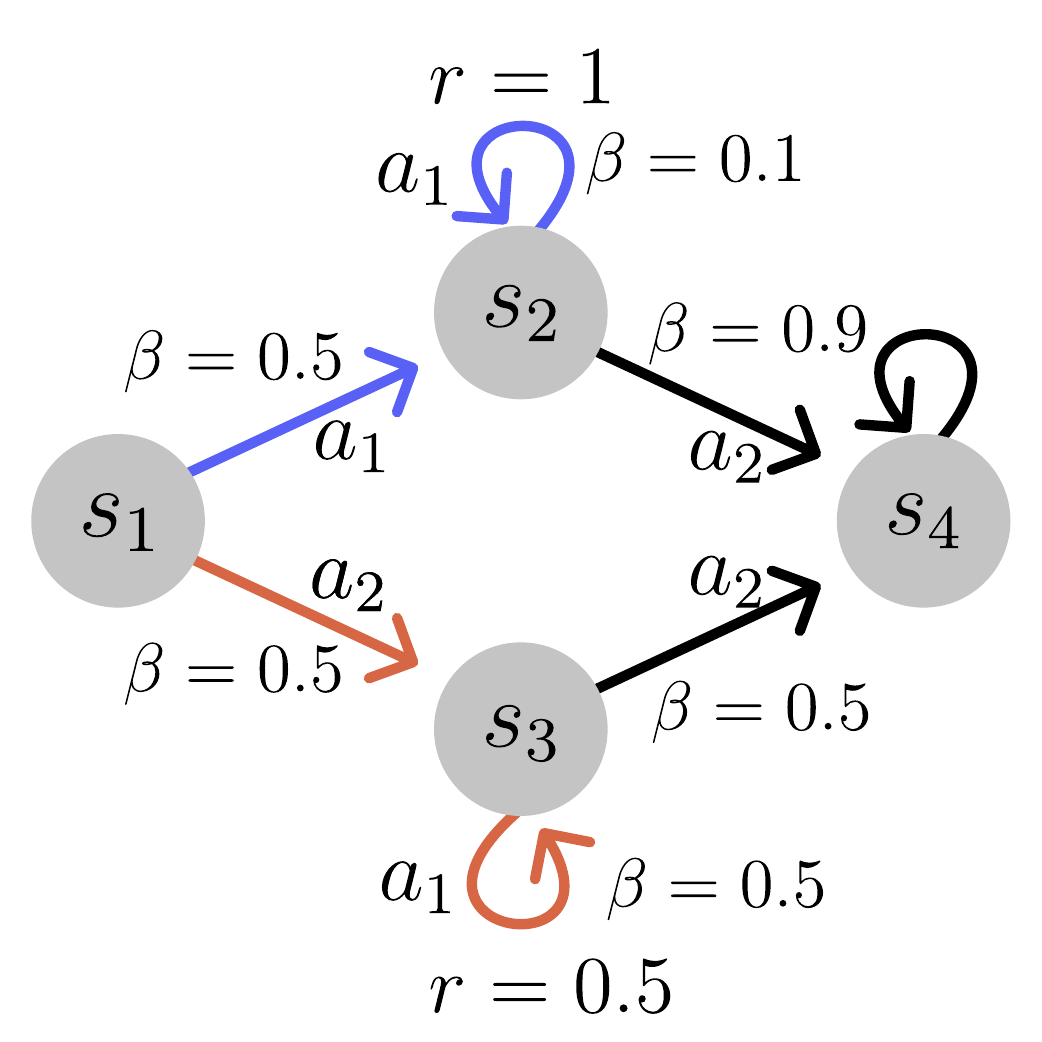}
        \caption{An example where \rvs has exponential sample complexity in a deterministic environment.}
        \label{fig:loop}
\end{wrapfigure}
To see why this is the case, we consider the MDP illustrated in Figure \ref{fig:loop} with horizon $ H\gg 4$. The MDP has four states each with two actions. All transitions and rewards are deterministic. The only actions with non-zero reward are $ r(s_2, {\color{blue}a_1}) = 1$ and $ r(s_3, {\color{red}a_1}) = 0.5$. The interesting decision is at $ s_1 $ where $ {\color{blue}a_1} $ is better than $ {\color{red}a_2}$.

Note that for any integer $ 1 \leq k < H/2$, we have that $ P_\beta(g = k|s_1, {\color{red}a_2}) = 0.5 \cdot 0.5^{2k} = 0.5 \cdot (0.25)^k$, while $ P_\beta(g = k|s_1, {\color{blue}a_1}) = 0.5 \cdot (0.1)^k$. Conditioning on any such $ k$ will make us more likely to choose the bad action ${\color{red}a_2}$ from $ s_1$. 
The only way to increase the likelihood of the good action $ {\color{blue}a_1}$ from $ s_1$ and $ s_2$ is to condition on $ f(s_1) > H/2$. Unfortunately for \rvs, the probability of observing $ g > H/2$ is extremely small, since for any such $ f $ we have $ P_\beta(g = f(s_1)) \leq 0.5 \cdot (0.1)^{H/2} \leq 10^{-H/2}$. Thus, both $ \alpha_f $ and the sample complexity of learning for any $ f $ that will yield a policy better than the behavior is exponential in the horizon $ H$.

% In contrast, with just one trajectory that contains the good action $ a_1 $ from $ s_1 $, dynamic programming (e.g. Q learning) will learn the optimal policy that gets return $ g = 99$. This is fairly likely, as the probability of observing $ s_1, a_1$ is $ P_\beta(s_1, a_1) = 0.5 \cdot 0.1 = 0.05$. So we expect to see this once in every 20 trajectories. By requiring trajectory level information instead of propagating information from single transitions, \rvs can be very inefficient.

Fundamentally, the problem here is that \rvs uses trajectory-level information instead of performing dynamic programming on individual transitions. But, collecting enough trajectory-level information can take exponentially many samples in the horizon. In contrast, DP merely requires coverage of transitions in the MDP to perform planning and thus avoids this issue of exponential sample complexity. In the next section we will delve deeper into this comparison with DP-based approaches as well as the simple top-\% BC baseline.

\section{Comparing \rvs with bounds for alternative methods}

Now that we understand the rate at which we expect \rvs to converge, we briefly present the convergence rates of two baseline methods for comparison. In particular, we will leverage an existing analysis of a DP-based algorithm, and conduct a novel analysis of top-\% BC. We find that the sample complexity of \rvs has a similar rate to top-\% BC, and is worse than DP due to the potentially exponential dependence on horizon that stems from return coverage.

\subsection{Comparison to dynamic programming.}

We will compare to the state of the art (to our knowledge) bound for a DP-based offline RL algorithm. Namely, we will look at the results of \cite{xie2021bellman} for pessimistic soft policy iteration. Similar results exist for slightly different algorithms or assumptions \citep{chen2019information, wang2020statistical}, but we choose this one since it is both the tightest and more closely aligns with the practical actor-critic algorithms that we use for our experiments. Their bound makes the following assumptions about the function class $ F $ and the dataset (letting $ \mathcal{T}^\pi $ represent the Bellman operator for policy $ \pi$): 
\begin{enumerate}[leftmargin=*]
    \item Realizability: for any policies $ \pi, \pi'$ there exists $ f \in F$ with $\|f - \mathcal{T}^\pi f\|_{2, P_{\pi'}}^2 \leq \epsilon_1$.
    \item Bellman completeness: for any $ \pi$ and $ f \in F$ there exists $ f' \in F$ such that  $\|f' - \mathcal{T}^\pi f\|_{2, P_{\beta}}^2 \leq \epsilon_2$.
    \item Coverage: $ \frac{P_{\pi^*}(s,a)}{P_\beta(s,a)} \leq C$ for all $ s,a$\footnote{The original paper uses a slightly tighter notion of coverage, but this bound will suffice for our comparison.}.
\end{enumerate}
With these assumptions in place, the sample complexity bound takes the form\footnote{The original paper considers an infinite horizon discounted setting. For the purposes of comparison, we will just assume that $ \frac{1}{1-\gamma}$ can be replaced by $ H$.}:
\begin{align}
    J(\pi^*) - J(\hat \pi) \leq O\left( H^2 \left(\sqrt{\frac{C \log |F||\Pi|/\delta}{N}} \right)  +  H^2\sqrt{C (\epsilon_1 + \epsilon_2)} \right)
\end{align}
Note: this is the result for the ``information-theoretic'' form of the algorithm that cannot be efficiently implemented. The paper also provides a ``practical'' version of the algorithm for which the bound is the same except that the the square root in the first term is replaced with a fifth root. 

There are several points of comparison with our analysis (specifically, our Corollary \ref{cor:combined}). The first thing to note is that for \rvs to compete with the optimal policy, we require nearly deterministic dynamics and a priori knowledge of the optimal conditioning function. These assumptions are not required for the DP-based algorithm; this is a critical difference, since it is clear that these conditions often do not hold in practice.

Comparing the coverage assumptions, our $ C_f$ becomes nearly equivalent to $ C$. The major difference is that our analysis of \rvs also requires dependence on return coverage $ 1/\alpha_f$. This is problematic since as seen in Section \ref{sec:finite}, this return coverage dependence can be exponential in horizon in cases where the state coverage does not depend on horizon.

Comparing the approximation error assumptions, we see that the realizability and completeness assumptions required for DP are substantially less intuitive than the standard supervised learning approximation error assumption needed for \rvs. These assumptions are not directly comparable, but intuitively the \rvs approximation error assumption is simpler. 
%Moreover, from an optimization and computation point of view the \rvs algorithm is substantially simpler, which was part of the original motivation for the method. 

Finally, dependence on $ H $ is the same for both methods and dependence on $ N$ depends on which version of the DP algorithm we compare to. For the information-theoretic algorithm DP has better dependence on $ N$, but for the practical algorithm \rvs has better dependence. It is not clear whether the dependence on $ N$ in either the \rvs analysis or in the analysis of the practical algorithm from \cite{xie2021bellman} is tight, and it is an interesting direction for future work to resolve this issue.

\subsection{Comparison to top-\% behavior cloning.}

The closest algorithm to \rvs is top-\% BC, which was introduced as a baseline for Decision Transformers \citep{chen2021decision}. This algorithm simply sorts all trajectories in the dataset by return and takes the top $ \rho $ fraction of trajectories to run behavior cloning (for $ \rho \in [0,1]$). The most obvious difference between this algorithm and \rvs is that \rvs allows us to plug in different conditioning functions at test time to produce different policies, while top-\% BC learns only one policy. However, if we want to achieve high returns, the two algorithms are quite similar.

The full statements and proofs of our theoretical results for top-\% BC are deferred to Appendix \ref{app:pbc}. The results are essentially the same as those for \rvs except for two key modifications:

\paragraph{Defining coverage.} The first difference in the analysis is the notion of coverage. For \rvs we needed the return distribution to cover the conditioning function $ f$. For top-\% BC we instead let $ g_\rho$ be the $ 1-\rho$ quantile of the return distribution over trajectories sampled by the behavior $ \beta$ and then define coverage as $ P_\beta(g \geq g_\rho|s) \geq \alpha_\rho$ for all $ s$. This modification is somewhat minor.

\paragraph{Sample size and generalization.} The main difference between \rvs and top-\% BC is that the \rvs algorithm attempts to transfer information gained from low-return trajectories while the top-\% BC algorithm simply throws those trajectories away. This shows up in the formal bounds since for a dataset of size $ N $ the top-\% BC algorithm only uses $ \rho \cdot N$ samples while \rvs uses all $ N$. Depending on the data distribution, competing with the optimal policy may require setting $ \rho$ very close to zero (exponentially small in $ H$) yielding poor sample complexity.

These bounds suggest that \rvs can use generalization across returns to provide improvements in sample complexity over top-\% BC by leveraging all of the data. However, the \rvs model is attempting to learn a richer class of functions that conditions on reward, which may require a larger policy class negating some of this benefit. 
Overall, \rvs should expect to beat top-\% BC if the behavior policy is still providing useful information about how to interact with the environment in low-return trajectories that top-\% BC would throw away.

% This corollary demonstrates how the reduced dataset size can harm the performance of percentage BC relative to \rvs by comparing the denominators of the $N $-dependent term of the bounds. These bounds do suggest that generalization across returns can provide improvements in sample complexity. However, the \rvs model is attempting to learn a richer class of functions that conditions on reward, which may require a larger policy class.  Moreover, the different comparator  policies  $\pi_f $ and $ \pi_\tau$ along with the difference notions of coverage make the bounds not directly comparable. 

\section{Experiments}\label{sec:exp}

We have illustrated through theory and some simple examples when we expect \rvs to work, but the theory does not cover all cases that are relevant for practice. In particular, it is not clear how the neural networks trained in practice can leverage generalization across returns. Moreover, one of the key benefits to \rvs approaches (as compared to DP) is that by avoiding the instabilities of non-linear off-policy DP in favor of supervised learning, one might hope that \rvs is more stable in practice. In this section we attempt to test these capabilities first through targeted experiments in a point-mass environment and then by comparisons on standard benchmark data.

Throughout this section we will consider six algorithms, two from each of three categories: 
\begin{enumerate}
    \item {\color{blue}Behavior cloning (BC)}: standard behavior cloning (BC) and percentage behavior cloning (\%BC) that runs BC on the trajectories with the highest returns \citep{chen2021decision}.
    \item {\color{Brown}Dynamic programming (DP)}: TD3+BC \citep{fujimoto2021minimalist} a simple DP-based offline RL approach and IQL \citep{kostrikov2021offlineb} a more stable DP-based offline RL approach.
    \item {\color{OliveGreen}Return-conditioned supervised learning (RCSL)}: RvS \citep{emmons2021rvs} an \rvs approach using simple MLP policies, and DT \citep{chen2021decision} an \rvs approach using transformer policies.
\end{enumerate}
All algorithms are implemented in JAX \citep{jax2018github} using flax \citep{flax2020github} and the jaxrl framework \citep{kostrikovjaxrl}, except for DT which is taken from the original paper. Full details can be found in Appendix \ref{app:exp-details} and code can be found at \url{https://github.com/davidbrandfonbrener/rcsl-paper}.

\subsection{Point-mass datasets}

First, we use targeted experiments to demonstrate how the tabular failure modes illustrated above can arise even in simple deterministic MDPs that may be encountered in continuous control. Specifically, we will focus on the issue of exponential sample complexity discussed in Section \ref{sec:finite}.
We build our datasets in an environment using the Deepmind control suite \citep{tassa2018deepmind} and MuJoCo simulator \citep{mujoco}. The environment consists of a point-mass navigating in a 2-d plane. 
%The state space is 4-dimensional to include position and velocity and the action space is the 2-dimensional force applied to the point. 
%We consider episodes with $ H = 400$. 

To build an example with exponential sample complexity we construct a navigation task with a goal region in the center of the environment. The dataset is constructed by running a behavior policy that is a random walk that is biased towards the top right of the environment. To construct different levels of reward coverage, we consider the environment and dataset under three different reward functions (ordered by probability of seeing a trajectory with high return, from lowest to highest):
\begin{enumerate}
    \item[(a)] The ``ring of fire'' reward. This reward is 1 within the goal region, -1 in the ring of fire region surrounding the goal, and 0 otherwise
    \item[(b)] The sparse reward. This reward is 1 within the goal region and 0 otherwise. 
    \item[(c)] The dense reward. This reward function is 1 within the goal region and gradually decays with the Euclidean distance outside of it.
\end{enumerate}

\begin{figure}[h]
    % \begin{tabular}{cccc}
    %     \includegraphics[width=0.25\textwidth]{} & \includegraphics[width=0.25\textwidth]{} & \includegraphics[width=0.25\textwidth]{} & \\
    %     \includegraphics[width=0.25\textwidth]{} & \includegraphics[width=0.25\textwidth]{} & \includegraphics[width=0.25\textwidth]{} & 
    %     \includegraphics[width=0.15\textwidth]{}\\
    %     (a) Ring of fire & (b) Sparse & (c) Dense & 
    % \end{tabular}
    \includegraphics[width=0.84\textwidth]{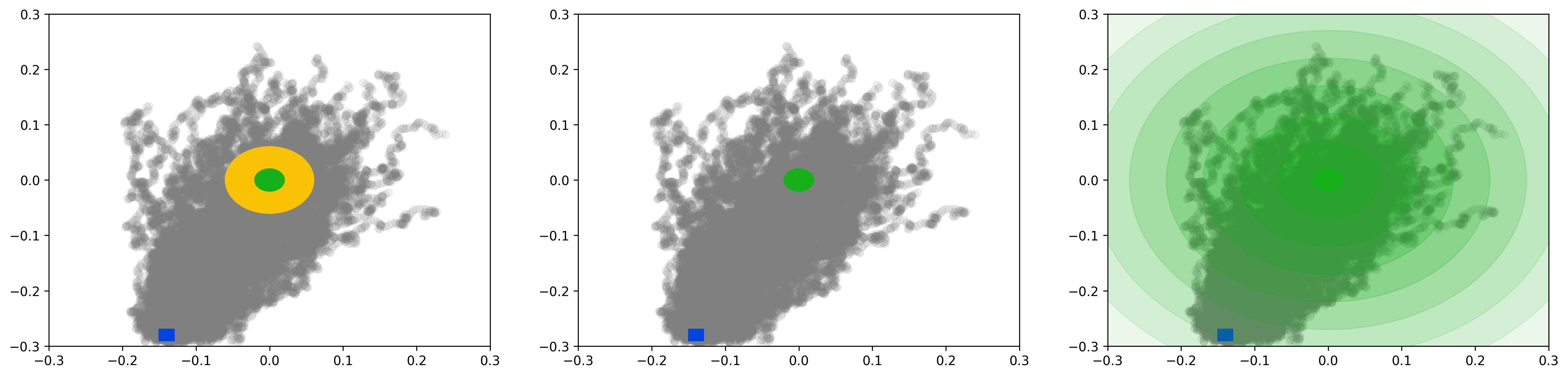}\\
    \includegraphics[width=\textwidth]{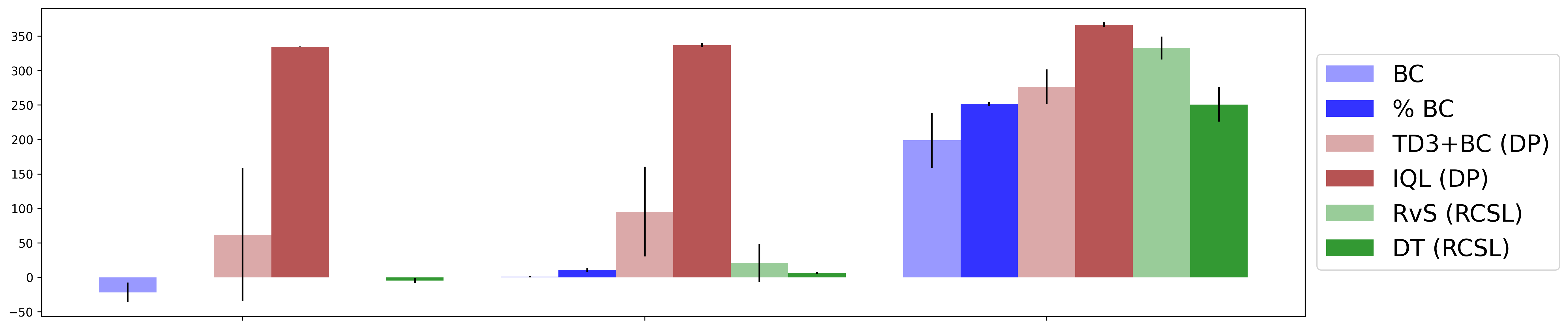}
    
    \quad \quad \quad (a) Ring of fire \quad \quad \quad \quad \quad (b) Sparse \quad \quad \quad \quad \quad \quad \quad  (c) Dense
    
    \caption{\rvs fails under reward functions that lead to exponentially small probability of sampling good trajectories, but can generalize when the reward is dense. Error bars show standard deviation across three seeds. BC methods are in blue, DP methods in brown, and \rvs methods in green.}
    \label{fig:ring}
\end{figure}

Intuitively, the ring of fire reward will cause serious problems for \rvs approaches when combined with the random walk behavior policy. The issue is that any random walk which reached the goal region is highly likely to spend more time in the region of negative rewards than in the actual goal states, since the ring of fire has larger area than the goal. As a result, while they are technically supported by the distribution, it is unlikely to find many trajectories (if any at all) with positive returns in the dataset, let alone near-optimal returns. As a result, the \rvs-based approaches are not even able to learn to achieve positive returns, as seen in Figure \ref{fig:ring}.

The sparse reward is also difficult for the \rvs-based algorithms, for similar reasons; however the problem is less extreme since any trajectory that gets positive reward must go to the goal, so there is signal in the returns indicating where the goal is.
In contrast, the dense reward provides enough signal in the returns that \rvs approaches are able to perform well, although still not as well as IQL. It is also worth noting that because the datset still does not have full coverage of the state-space, simple DP-based algorithms like TD3+BC can struggle with training instability.

\subsection{Benchmark data}

In addition to our targeted experiments we also ran our candidate algorithms on some datasets from the D4RL benchmark \citep{fu2020d4rl}. These are meant to provide more realistic and larger-scale data scenarios. While this also makes these experiments less targeted, we can still see that the insights that we gained in simpler problems can be useful in these larger settings.
We attempt to choose a subset of the datasets with very different properties from eachother. For example, the play data on the ant-maze environment is very diverse and plentiful while the human demonstration data on the pen environment has poor coverage but high values. Results are shown in Figure \ref{fig:d4rl}. And additional results leading to similar conclusions can be found in Appendix \ref{app:exp-extra}.

\begin{wrapfigure}[18]{r}{0.51\textwidth}
    \centering
    \includegraphics[width=0.5\textwidth]{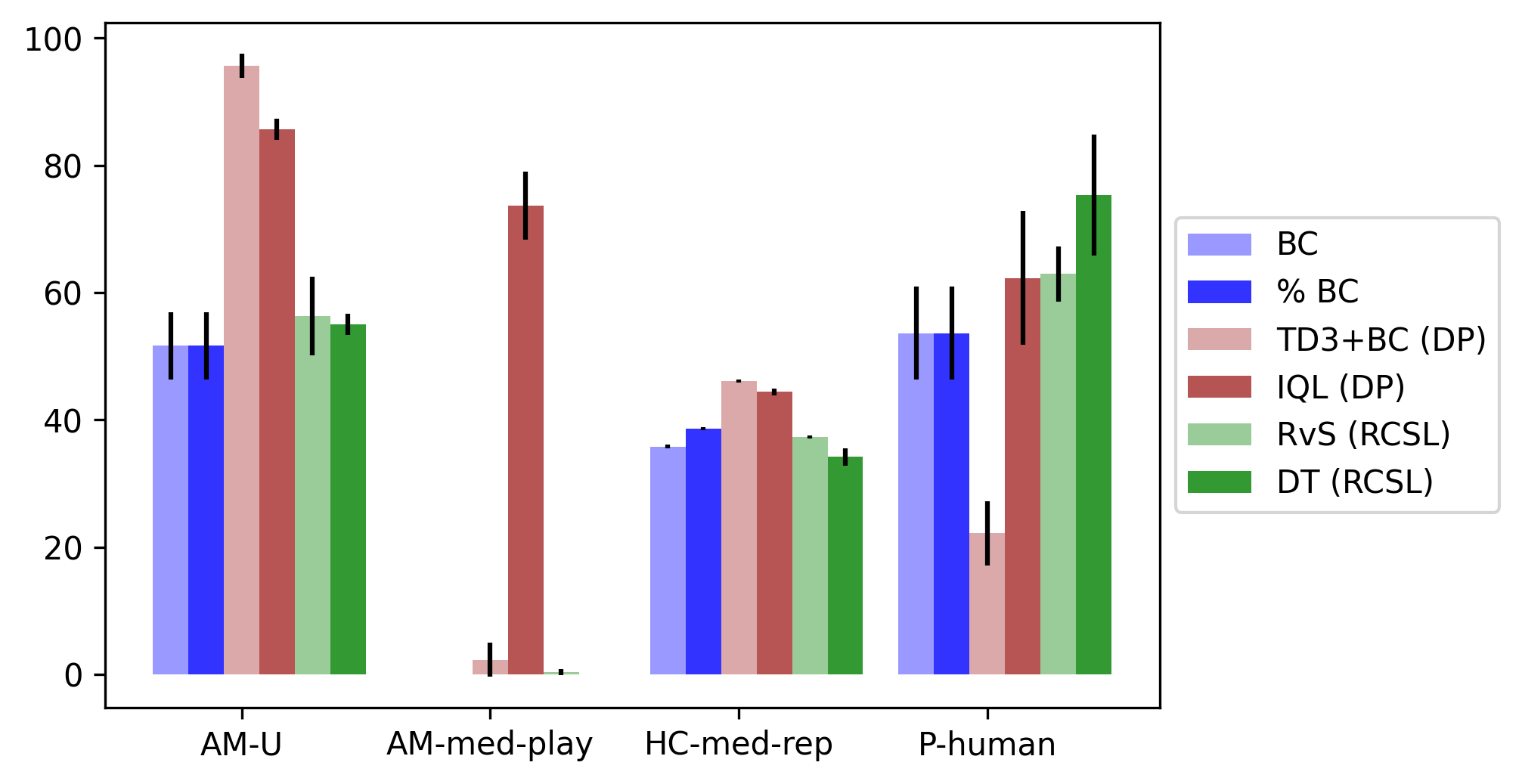}
    \caption{Data from \textsc{antmaze-umaze}, \textsc{antmaze-medium-play}, \textsc{halfcheetah-medium-replay}, and \textsc{pen-human}. Error bars show standard deviation across three seeds. Each algorithm is tuned over 4 values and best performance is reported. }
    \label{fig:d4rl}
\end{wrapfigure}

We find that for most of the datasets DP-based algorithms TD3+BC and IQL outperform both the BC-based algorithms and \rvs-based algorithms. This is especially stark on the \textsc{antmaze} datasets where the behavior policy is highly stochastic, requiring the learner to stitch together trajectories to achieve good performance. While none of these tasks has stochastic dynamics, the issues of return coverage and trajectory stitching persist.

In contrast, \rvs performs well when the behavior policy is already high quality, but not optimal (as in the \textsc{pen-human} task). Since the data is suboptimal and reward is dense, there is opportunity for \rvs to outperform the BC-based methods. Moreover, since the data has poor coverage, standard DP approaches like TD3+BC are highly unstable. IQL is more stable and performs similarly to the \rvs-based algorithms, but is outperformed by DT (perhaps due to the use of history-dependent policies).

% \answerTODO{}:
% \begin{itemize}
%     \item Add more toy envs? What should we be testing? (horizon dependence, low vs. high coverage, low vs. high data, behavior bias)
%     \item Add more larger scale experiments? But what? Just more D4RL data?
%     \item Example where Q learning blows up? Quantify the ``easiness'' of the learned RvS function vs. learned Q functions?
%     \item  environment where the x axis is complicated and the y axis is linearly better reward, where BC\% will do worse than RvS
% \end{itemize}

\section{Discussion}

Looking back at our results, we can better place \rvs in relation to the more classical BC and DP algorithms. Like BC, \rvs relies on supervised learning and thus inherits its simplicity, elegance, and ease of implementation and debugging. However, it also inherits BC's dependence on the quality of the behavior policy. 
This dependence can be somewhat reduced in (nearly) deterministic environments, where conditioning on high returns can break the bias towards the behavior policy. But, the reliance on trajectory-level information still means that \rvs is fundamentally limited by the quality of the best trajectories in the dataset, which can require a sample complexity exponential in horizon in order to compete with the optimal policy, even in deterministic environments. 

%\rvs is a variant of behavior cloning, with many of the same strengths and limitations.

% BC, as a variant of SL, is simple, elegant, and straightforward to implement and debug. 
% However, its performance is strongly dependent on the quality of the dataset; BC performs well only when the dataset contains mostly high-value trajectories.
% BC can be modified into top-\% BC, and this change will improve performance whenever high-return trajectories are caused primarily by the decision of the policy (i.e., when the environment is nearly deterministic). Thus, as compared to BC, top-\% BC relies less on the quality of the dataset, but more on a quality of the environment.
% Finally, top-\% BC can be modified into \rvs, which further improves performance if low-return trajectories obey dynamics that generalize to high-return trajectories. Once again, this change reduces dependence on the dataset, but introduces an assumption on the environment. Thus, in sum, we should understand \rvs as a variant of BC with a somewhat weaker dependence on the dataset but two additional dependencies on properties of the environment.

In contrast, DP methods are capable of learning good policies even when the dataset does not contain any high-return trajectories and the environment is stochastic. This represents a fundamental gap between the two approaches that cannot be bridged within the \rvs paradigm.
However, empirically, current deep DP algorithms are not well-behaved. These algorithms are often unstable and difficult to debug, although recent work has started to alleviate these issues somewhat~\cite{kostrikov2021offlineb}. %\footnote{Note that these issues may improved with research; for example, contrast TD+BC to IQL.}

In sum, for tasks where the requirements for \rvs to perform well are met, it is an excellent practical choice, with great advantages in simplicity over DP. Since many real-world tasks of relevance have these attributes, \rvs techniques could have substantial impact. But as a general learning paradigm, \rvs is fundamentally limited in ways that DP is not.

\subsection*{Acknowledgments}
This work was partially supported by NSF RI-1816753, NSF CAREER CIF 1845360, NSF CHS-1901091, NSF Scale MoDL DMS 2134216, Capital One and Samsung Electronics.
DB was supported by the Department of Defense (DoD) through the National Defense Science \& Engineering Graduate Fellowship (NDSEG) Program.

\bibliography{rl.bib}

\begin{thebibliography}{34}
\providecommand{\natexlab}[1]{#1}
\providecommand{\url}[1]{\texttt{#1}}
\expandafter\ifx\csname urlstyle\endcsname\relax
  \providecommand{\doi}[1]{doi: #1}\else
  \providecommand{\doi}{doi: \begingroup \urlstyle{rm}\Url}\fi

\bibitem[Achiam et~al.(2017)Achiam, Held, Tamar, and
  Abbeel]{achiam2017constrained}
J.~Achiam, D.~Held, A.~Tamar, and P.~Abbeel.
\newblock Constrained policy optimization.
\newblock In \emph{International Conference on Machine Learning}, pages 22--31.
  PMLR, 2017.

\bibitem[Andrychowicz et~al.(2017)Andrychowicz, Wolski, Ray, Schneider, Fong,
  Welinder, McGrew, Tobin, Pieter~Abbeel, and
  Zaremba]{andrychowicz2017hindsight}
M.~Andrychowicz, F.~Wolski, A.~Ray, J.~Schneider, R.~Fong, P.~Welinder,
  B.~McGrew, J.~Tobin, O.~Pieter~Abbeel, and W.~Zaremba.
\newblock Hindsight experience replay.
\newblock \emph{Advances in neural information processing systems}, 30, 2017.

\bibitem[Bartlett et~al.(2006)Bartlett, Jordan, and
  McAuliffe]{bartlett2006convexity}
P.~L. Bartlett, M.~I. Jordan, and J.~D. McAuliffe.
\newblock Convexity, classification, and risk bounds.
\newblock \emph{Journal of the American Statistical Association}, 101\penalty0
  (473):\penalty0 138--156, 2006.

\bibitem[Bellemare et~al.(2022)Bellemare, Dabney, and Rowland]{bdr2022}
M.~G. Bellemare, W.~Dabney, and M.~Rowland.
\newblock \emph{Distributional Reinforcement Learning}.
\newblock MIT Press, 2022.
\newblock \url{http://www.distributional-rl.org}.

\bibitem[Boucheron et~al.(2005)Boucheron, Bousquet, and
  Lugosi]{boucheron2005theory}
S.~Boucheron, O.~Bousquet, and G.~Lugosi.
\newblock Theory of classification: A survey of some recent advances.
\newblock \emph{ESAIM: probability and statistics}, 9:\penalty0 323--375, 2005.

\bibitem[Bradbury et~al.(2018)Bradbury, Frostig, Hawkins, Johnson, Leary,
  Maclaurin, Necula, Paszke, Vander{P}las, Wanderman-{M}ilne, and
  Zhang]{jax2018github}
J.~Bradbury, R.~Frostig, P.~Hawkins, M.~J. Johnson, C.~Leary, D.~Maclaurin,
  G.~Necula, A.~Paszke, J.~Vander{P}las, S.~Wanderman-{M}ilne, and Q.~Zhang.
\newblock {JAX}: composable transformations of {P}ython+{N}um{P}y programs,
  2018.
\newblock URL \url{http://github.com/google/jax}.

\bibitem[Chen and Jiang(2019)]{chen2019information}
J.~Chen and N.~Jiang.
\newblock Information-theoretic considerations in batch reinforcement learning.
\newblock In \emph{Proceedings of the 36th International Conference on Machine
  Learning}. PMLR, 2019.

\bibitem[Chen et~al.(2021)Chen, Lu, Rajeswaran, Lee, Grover, Laskin, Abbeel,
  Srinivas, and Mordatch]{chen2021decision}
L.~Chen, K.~Lu, A.~Rajeswaran, K.~Lee, A.~Grover, M.~Laskin, P.~Abbeel,
  A.~Srinivas, and I.~Mordatch.
\newblock Decision transformer: Reinforcement learning via sequence modeling.
\newblock \emph{arXiv preprint arXiv:2106.01345}, 2021.

\bibitem[Emmons et~al.(2021)Emmons, Eysenbach, Kostrikov, and
  Levine]{emmons2021rvs}
S.~Emmons, B.~Eysenbach, I.~Kostrikov, and S.~Levine.
\newblock Rvs: What is essential for offline rl via supervised learning?
\newblock \emph{arXiv preprint arXiv:2112.10751}, 2021.

\bibitem[Fu et~al.(2020)Fu, Kumar, Nachum, Tucker, and Levine]{fu2020d4rl}
J.~Fu, A.~Kumar, O.~Nachum, G.~Tucker, and S.~Levine.
\newblock D4rl: Datasets for deep data-driven reinforcement learning.
\newblock \emph{arXiv preprint arXiv:2004.07219}, 2020.

\bibitem[Fujimoto and Gu(2021)]{fujimoto2021minimalist}
S.~Fujimoto and S.~S. Gu.
\newblock A minimalist approach to offline reinforcement learning.
\newblock \emph{arXiv preprint arXiv:2106.06860}, 2021.

\bibitem[Furuta et~al.(2021)Furuta, Matsuo, and Gu]{furuta2021generalized}
H.~Furuta, Y.~Matsuo, and S.~S. Gu.
\newblock Generalized decision transformer for offline hindsight information
  matching.
\newblock \emph{arXiv preprint arXiv:2111.10364}, 2021.

\bibitem[Ghosh et~al.(2019)Ghosh, Gupta, Reddy, Fu, Devin, Eysenbach, and
  Levine]{ghosh2019learning}
D.~Ghosh, A.~Gupta, A.~Reddy, J.~Fu, C.~Devin, B.~Eysenbach, and S.~Levine.
\newblock Learning to reach goals via iterated supervised learning.
\newblock \emph{arXiv preprint arXiv:1912.06088}, 2019.

\bibitem[Heek et~al.(2020)Heek, Levskaya, Oliver, Ritter, Rondepierre, Steiner,
  and van {Z}ee]{flax2020github}
J.~Heek, A.~Levskaya, A.~Oliver, M.~Ritter, B.~Rondepierre, A.~Steiner, and
  M.~van {Z}ee.
\newblock {F}lax: A neural network library and ecosystem for {JAX}, 2020.
\newblock URL \url{http://github.com/google/flax}.

\bibitem[Janner et~al.(2021)Janner, Li, and Levine]{janner2021offline}
M.~Janner, Q.~Li, and S.~Levine.
\newblock Offline reinforcement learning as one big sequence modeling problem.
\newblock \emph{Advances in neural information processing systems}, 34, 2021.

\bibitem[Kaelbling(1993)]{Kaelbling1993LearningTA}
L.~P. Kaelbling.
\newblock Learning to achieve goals.
\newblock In \emph{IJCAI}, 1993.

\bibitem[Kaplan et~al.(2020)Kaplan, McCandlish, Henighan, Brown, Chess, Child,
  Gray, Radford, Wu, and Amodei]{kaplan2020scaling}
J.~Kaplan, S.~McCandlish, T.~Henighan, T.~B. Brown, B.~Chess, R.~Child,
  S.~Gray, A.~Radford, J.~Wu, and D.~Amodei.
\newblock Scaling laws for neural language models.
\newblock \emph{arXiv preprint arXiv:2001.08361}, 2020.

\bibitem[Kingma and Ba(2014)]{kingma2014adam}
D.~P. Kingma and J.~Ba.
\newblock Adam: A method for stochastic optimization.
\newblock \emph{arXiv preprint arXiv:1412.6980}, 2014.

\bibitem[Kostrikov(2021)]{kostrikovjaxrl}
I.~Kostrikov.
\newblock Jaxrl: Implementations of reinforcement learning algorithms in jax.,
  10 2021.
\newblock \emph{URL https://github. com/ikostrikov/jaxrl}, 2021.

\bibitem[Kostrikov et~al.(2021)Kostrikov, Nair, and
  Levine]{kostrikov2021offlineb}
I.~Kostrikov, A.~Nair, and S.~Levine.
\newblock Offline reinforcement learning with implicit q-learning.
\newblock \emph{arXiv preprint arXiv:2110.06169}, 2021.

\bibitem[Kumar et~al.(2019)Kumar, Peng, and Levine]{kumar2019reward}
A.~Kumar, X.~B. Peng, and S.~Levine.
\newblock Reward-conditioned policies.
\newblock \emph{arXiv preprint arXiv:1912.13465}, 2019.

\bibitem[Paster et~al.(2022)Paster, McIlraith, and Ba]{paster2022you}
K.~Paster, S.~McIlraith, and J.~Ba.
\newblock You can't count on luck: Why decision transformers fail in stochastic
  environments.
\newblock \emph{arXiv preprint arXiv:2205.15967}, 2022.

\bibitem[Schmidhuber(2019)]{schmidhuber2019reinforcement}
J.~Schmidhuber.
\newblock Reinforcement learning upside down: Don't predict rewards--just map
  them to actions.
\newblock \emph{arXiv preprint arXiv:1912.02875}, 2019.

\bibitem[Shalev-Shwartz and Ben-David(2014)]{shalev2014understanding}
S.~Shalev-Shwartz and S.~Ben-David.
\newblock \emph{Understanding machine learning: From theory to algorithms}.
\newblock Cambridge university press, 2014.

\bibitem[Silver et~al.(2016)Silver, Huang, Maddison, Guez, Sifre, van~den
  Driessche, Schrittwieser, Antonoglou, Panneershelvam, Lanctot, Dieleman,
  Grewe, Nham, Kalchbrenner, Sutskever, Lillicrap, Leach, Kavukcuoglu, Graepel,
  and Hassabis]{Silver2016MasteringTG}
D.~Silver, A.~Huang, C.~J. Maddison, A.~Guez, L.~Sifre, G.~van~den Driessche,
  J.~Schrittwieser, I.~Antonoglou, V.~Panneershelvam, M.~Lanctot, S.~Dieleman,
  D.~Grewe, J.~Nham, N.~Kalchbrenner, I.~Sutskever, T.~P. Lillicrap, M.~Leach,
  K.~Kavukcuoglu, T.~Graepel, and D.~Hassabis.
\newblock Mastering the game of go with deep neural networks and tree search.
\newblock \emph{Nature}, 529:\penalty0 484--489, 2016.

\bibitem[Srivastava et~al.(2019)Srivastava, Shyam, Mutz, Ja{\'s}kowski, and
  Schmidhuber]{srivastava2019training}
R.~K. Srivastava, P.~Shyam, F.~Mutz, W.~Ja{\'s}kowski, and J.~Schmidhuber.
\newblock Training agents using upside-down reinforcement learning.
\newblock \emph{arXiv preprint arXiv:1912.02877}, 2019.

\bibitem[Sutton and Barto(2018)]{sutton2018reinforcement}
R.~S. Sutton and A.~G. Barto.
\newblock \emph{Reinforcement learning: An introduction}.
\newblock MIT press, 2018.

\bibitem[Tassa et~al.(2018)Tassa, Doron, Muldal, Erez, Li, Casas, Budden,
  Abdolmaleki, Merel, Lefrancq, et~al.]{tassa2018deepmind}
Y.~Tassa, Y.~Doron, A.~Muldal, T.~Erez, Y.~Li, D.~d.~L. Casas, D.~Budden,
  A.~Abdolmaleki, J.~Merel, A.~Lefrancq, et~al.
\newblock Deepmind control suite.
\newblock \emph{arXiv preprint arXiv:1801.00690}, 2018.

\bibitem[Todorov et~al.(2012)Todorov, Erez, and Tassa]{mujoco}
E.~Todorov, T.~Erez, and Y.~Tassa.
\newblock Mujoco: A physics engine for model-based control.
\newblock In \emph{2012 IEEE/RSJ International Conference on Intelligent Robots
  and Systems}, pages 5026--5033, 2012.
\newblock \doi{10.1109/IROS.2012.6386109}.

\bibitem[Wang et~al.(2020{\natexlab{a}})Wang, Foster, and
  Kakade]{wang2020statistical}
R.~Wang, D.~P. Foster, and S.~M. Kakade.
\newblock What are the statistical limits of offline rl with linear function
  approximation?, 2020{\natexlab{a}}.

\bibitem[Wang et~al.(2020{\natexlab{b}})Wang, Novikov, Zolna, Merel,
  Springenberg, Reed, Shahriari, Siegel, Gulcehre, Heess,
  et~al.]{wang2020critic}
Z.~Wang, A.~Novikov, K.~Zolna, J.~S. Merel, J.~T. Springenberg, S.~E. Reed,
  B.~Shahriari, N.~Siegel, C.~Gulcehre, N.~Heess, et~al.
\newblock Critic regularized regression.
\newblock \emph{Advances in Neural Information Processing Systems}, 33,
  2020{\natexlab{b}}.

\bibitem[Xie et~al.(2021)Xie, Cheng, Jiang, Mineiro, and
  Agarwal]{xie2021bellman}
T.~Xie, C.-A. Cheng, N.~Jiang, P.~Mineiro, and A.~Agarwal.
\newblock Bellman-consistent pessimism for offline reinforcement learning.
\newblock \emph{Advances in neural information processing systems}, 34, 2021.

\bibitem[Yang et~al.(2022)Yang, Schuurmans, Abbeel, and
  Nachum]{yang2022dichotomy}
M.~Yang, D.~Schuurmans, P.~Abbeel, and O.~Nachum.
\newblock Dichotomy of control: Separating what you can control from what you
  cannot.
\newblock \emph{arXiv preprint arXiv:2210.13435}, 2022.

\bibitem[Štrupl et~al.(2022)Štrupl, Faccio, Ashley, Schmidhuber, and
  Srivastava]{strupl2022upsidedown}
M.~Štrupl, F.~Faccio, D.~R. Ashley, J.~Schmidhuber, and R.~K. Srivastava.
\newblock Upside-down reinforcement learning can diverge in stochastic
  environments with episodic resets, 2022.

\end{thebibliography}

\newpage

%%%%%%%%%%%%%%%%%%%%%%%%%%%%%%%%%%%%%%%%%%%%%%%%%%%%%%%%%%%%
\section*{Checklist}

%%% BEGIN INSTRUCTIONS %%%
% The checklist follows the references.  Please
% read the checklist guidelines carefully for information on how to answer these
% questions.  For each question, change the default \answerTODO{} to \answerYes{},
% \answerNo{}, or \answerNA{}.  You are strongly encouraged to include a {\bf
% justification to your answer}, either by referencing the appropriate section of
% your paper or providing a brief inline description.  For example:
% \begin{itemize}
%   \item Did you include the license to the code and datasets? \answerYes{See Section .}
%   \item Did you include the license to the code and datasets? \answerNo{The code and the data are proprietary.}
%   \item Did you include the license to the code and datasets? \answerNA{}
% \end{itemize}
% Please do not modify the questions and only use the provided macros for your
% answers.  Note that the Checklist section does not count towards the page
% limit.  In your paper, please delete this instructions block and only keep the
% Checklist section heading above along with the questions/answers below.
%%% END INSTRUCTIONS %%%

\begin{enumerate}

\item For all authors...
\begin{enumerate}
  \item Do the main claims made in the abstract and introduction accurately reflect the paper's contributions and scope?
    \answerYes{}
  \item Did you describe the limitations of your work?
    \answerYes{}
  \item Did you discuss any potential negative societal impacts of your work?
    \answerYes{See Appendix \ref{app:societal}}
  \item Have you read the ethics review guidelines and ensured that your paper conforms to them?
    \answerYes{}
\end{enumerate}

\item If you are including theoretical results...
\begin{enumerate}
  \item Did you state the full set of assumptions of all theoretical results?
    \answerYes{}
        \item Did you include complete proofs of all theoretical results?
    \answerYes{}
\end{enumerate}

\item If you ran experiments...
\begin{enumerate}
  \item Did you include the code, data, and instructions needed to reproduce the main experimental results (either in the supplemental material or as a URL)?
    \answerYes{}
  \item Did you specify all the training details (e.g., data splits, hyperparameters, how they were chosen)?
    \answerYes{} See Appendix \ref{app:exp-details}
        \item Did you report error bars (e.g., with respect to the random seed after running experiments multiple times)?
    \answerYes{}
        \item Did you include the total amount of compute and the type of resources used (e.g., type of GPUs, internal cluster, or cloud provider)?
    \answerYes{} See Appendix \ref{app:exp-details}
\end{enumerate}

\item If you are using existing assets (e.g., code, data, models) or curating/releasing new assets...
\begin{enumerate}
  \item If your work uses existing assets, did you cite the creators?
    \answerYes{}
  \item Did you mention the license of the assets?
    \answerYes{} See Appendix \ref{app:exp-details}
  \item Did you include any new assets either in the supplemental material or as a URL?
    \answerNA{}
  \item Did you discuss whether and how consent was obtained from people whose data you're using/curating?
    \answerNA{}
  \item Did you discuss whether the data you are using/curating contains personally identifiable information or offensive content?
    \answerNA{}
\end{enumerate}

\item If you used crowdsourcing or conducted research with human subjects...
\begin{enumerate}
  \item Did you include the full text of instructions given to participants and screenshots, if applicable?
    \answerNA{}
  \item Did you describe any potential participant risks, with links to Institutional Review Board (IRB) approvals, if applicable?
    \answerNA{}
  \item Did you include the estimated hourly wage paid to participants and the total amount spent on participant compensation?
    \answerNA{}
\end{enumerate}

\end{enumerate}

%%%%%%%%%%%%%%%%%%%%%%%%%%%%%%%%%%%%%%%%%%%%%%%%%%%%%%%%%%%%
\newpage

\appendix

\section{Extended experimental results}\label{app:exp-extra}

Here we present extended versions of the D4RL experiments. We use the same setup as in Section \ref{sec:exp}, but run each of the algorithms on three different datasets in each environment. Explicitly we show results on \textsc{antmaze-umaze}, \textsc{antmaze-medium-play}, and \textsc{antmaze-large-play} in Figure \ref{fig:antmaze}. Then we show results on \textsc{halfcheetah-medium}, \textsc{halfcheetah-medium-replay}, and \textsc{halfcheetah-medium-expert} in Figure \ref{fig:halfcheetah}. Finally we show results on \textsc{pen-human}, \textsc{pen-cloned}, and \textsc{pen-expert} in Figure \ref{fig:pen}. 

These experiments corroborate the story from the main text. Without return coverage (as in the larger antmaze tasks), RCSL can fail dramatically. But in the case with return coverage but poor state coverage (as in the pen human dataset that only has 25 trajectories), RCSL can beat DP. However we see that with larger datasets that yield more coverage, DP recovers it's performance (as in pen expert which has 5000 trajectories, or 200x the amount of data as in the human dataset).

\begin{figure}[h]
    \centering
    \includegraphics[width=0.5\textwidth]{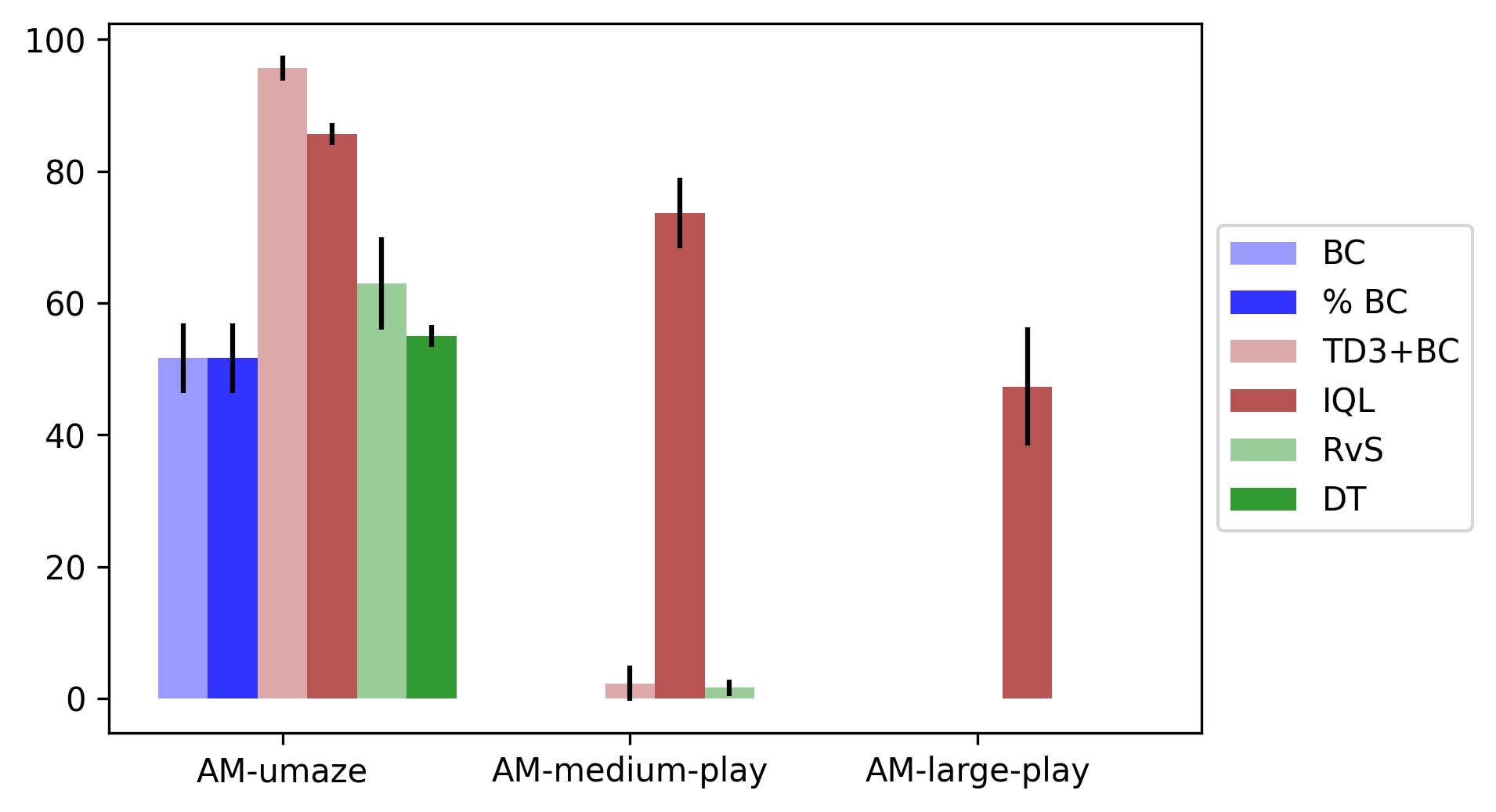}
    \caption{Experimental results on antmaze datasets.}
    \label{fig:antmaze}
\end{figure}

\begin{figure}[h]
    \centering
    \includegraphics[width=0.5\textwidth]{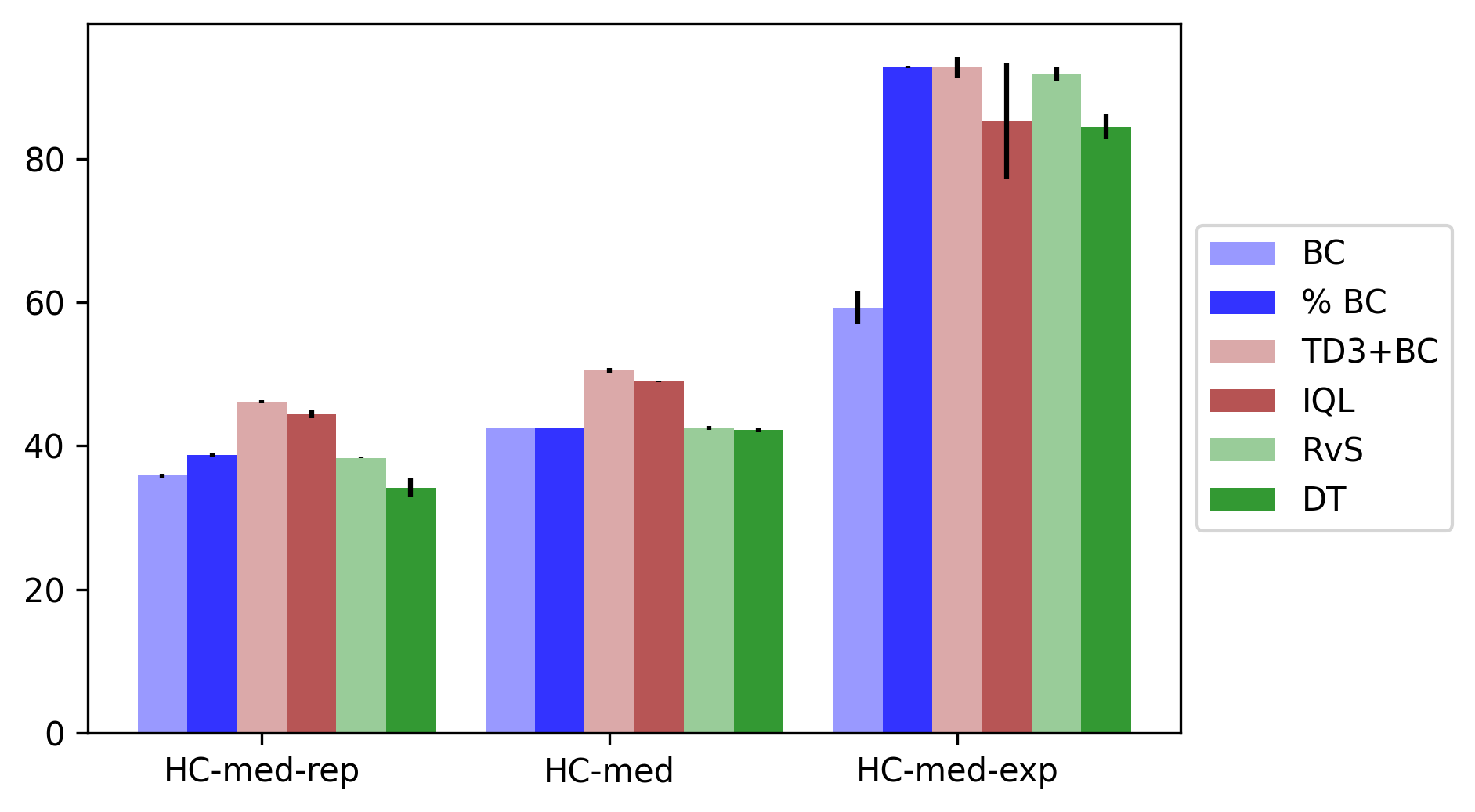}
    \caption{Experimental results on halfcheetah datasets.}
    \label{fig:halfcheetah}
\end{figure}

\begin{figure}[h]
    \centering
    \includegraphics[width=0.5\textwidth]{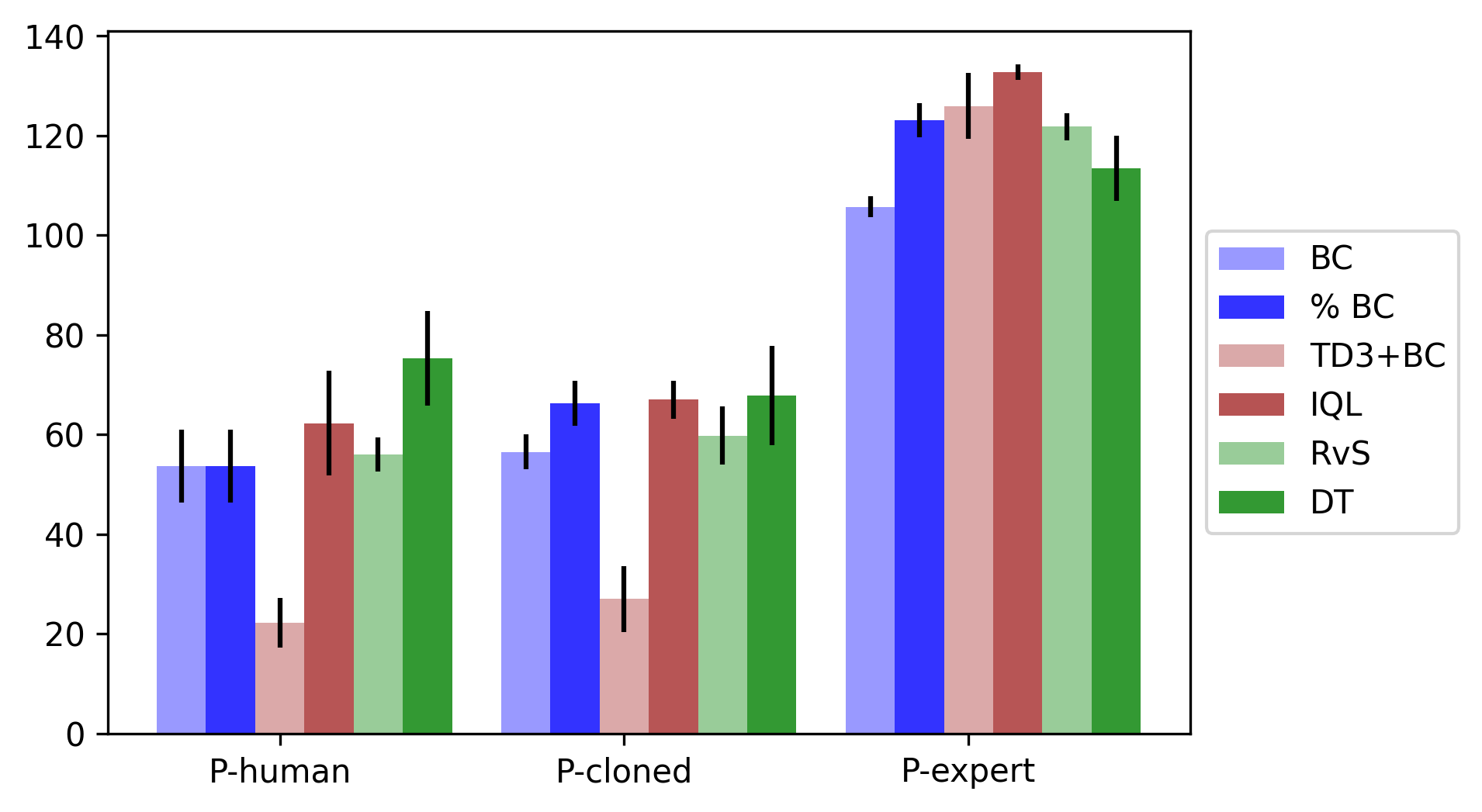}
    \caption{Experimental results on pen datasets.}
    \label{fig:pen}
\end{figure}

% \subsection{Toy Data}

% We also test the impact on stochasticity on RCSL by implementing a toy gridworld experiment with stochastic rewards. Results are shown in Figure \ref{fig:stochastic} This environment consists of a reward on the right wall (depicted in light red) that is stochastically 5 with probability 0.1 and -1 with probability 0.9 (so -0.5 in expectation) and a small deterministic positive reward of 0.01 along the top wall (depicted in light blue). As expected, since the high-return trajectories occur when the mass reaches the stochastic reward, RCSL that conditions on high-return yields poor expected returns. Dynamic programming algorithms like IQL are able to overcome this issue. It should be noted that Td3+BC fails here because the behavior policy is biased toward the stochastic reward and the BC regularization term is too strong to be outweighed by the Q function term in the policy objective.

% \begin{figure}
%     \centering
%     \includegraphics[width=0.5\textwidth]{}
%     \caption{Results on a stochastic gridworld environment.}
%     \label{fig:stochastic}
% \end{figure}

\section{Trajectory stitching}\label{app:stitch}

\subsection{Theory}

\begin{figure}[h]
       \centering
       \includegraphics[width=0.3\textwidth]{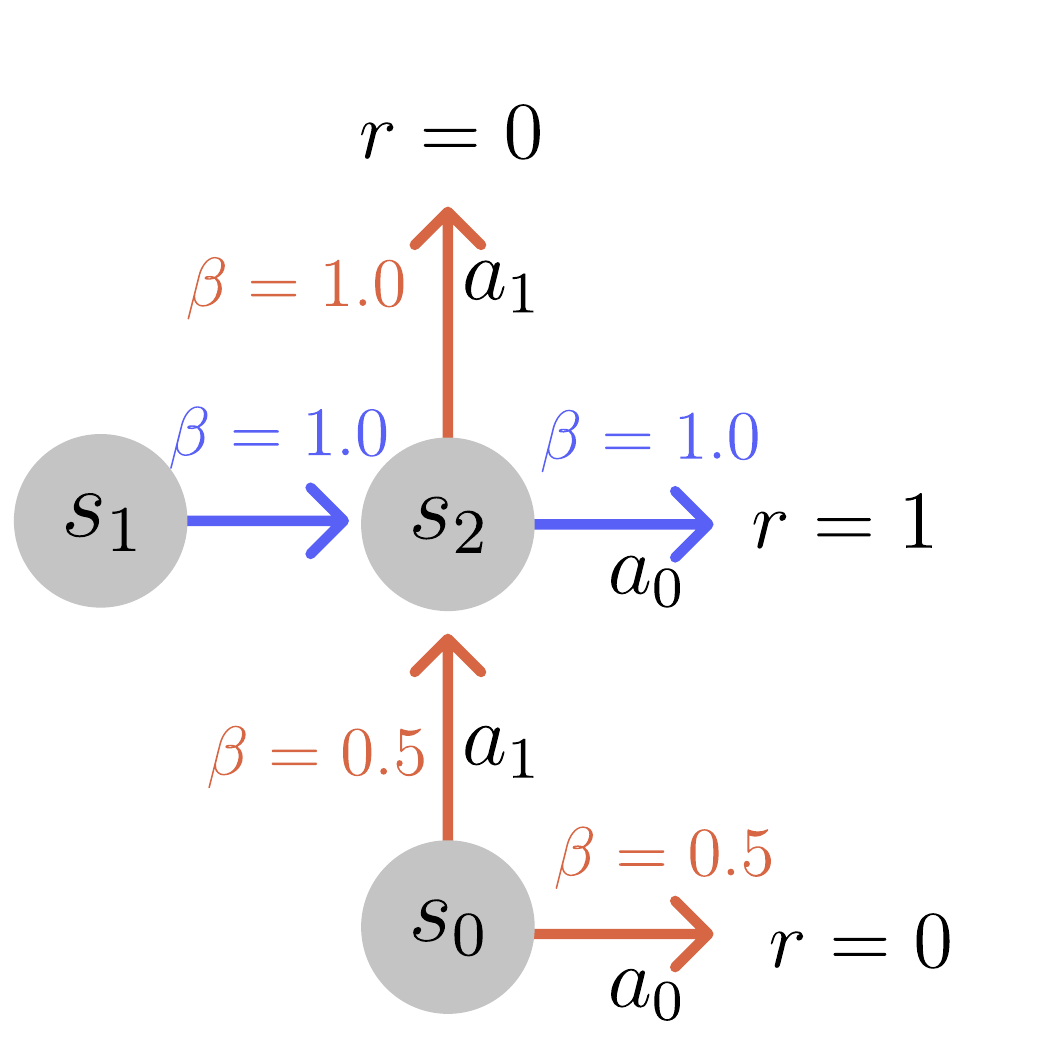}
        \caption{An example where \rvs fails to stitch trajectories.}
        \label{fig:stitch}
\end{figure}
 
A common goal from the offline RL literature is to be able to stitch together previously collected trajectories to outperform the behavior policy. This is in general not possible with \rvs. 
%We will consider one example, but here it is worth considering separate analyses for the case where we condition on the entire history (as in DT) or just the current state (as in RvS). 
The main issue here is that \rvs is using \emph{trajectory level} information during training, which precludes combining information across trajectories.
In this example we show that even with infinite data, when attempting to combine two datastreams using standard approaches to conditioning \rvs can fail to recover the optimal policy.

Consider the MDP illustrated in Figure \ref{fig:stitch} with three states $ s_0, s_1, s_2$ and horizon $ H = 2$. All transitions and rewards are deterministic as shown. We consider the case where data has been collected by two different processes. One process (illustrated in red) consists of episodes that always start at $ s_0$ and chooses the first action uniformly but chooses the bad action $ a_0$ deterministically from $ s_2$. The other process (illustrated in blue) consists of trajectories that always start at $ s_1 $ and deterministically go towards the good action, receiving reward of 1. We will consider what happens to \rvs at test time when initialized at $ s_0$.

The data does not contain any trajectories that begin in $ s_0$ and select $ a_1$ to transition to $ s_2 $ followed by $ a_1$, which is the optimal decision. But, the data does have enough information to stitch together the optimal trajectory from $ s_0$, and it is clear to see that DP-based approaches would easily find the optimal policy.

For \rvs, if we condition on optimal return $ g  = 1$, we get that $ \pi(\cdot |s_1, g=1) $ is undefined since we only observe trajectories with $ g = 0$ that originate at $ s_0$. To get a well-defined policy, we must set $ f(s_0) = 0$, but then $ \pi(a_1|s_1, g=0) = 0.5$. 
Thus, $ \pi$ will never choose the optimal path with probability larger than 0.5, for any conditioning function $ f$. Moreover, the conditioning function that does lead to success is non-standard: $ f(s_0) = 0, f(s_2) = 1$. For the standard approach to conditioning of setting the initial value and decreasing over time with observed rewards, \rvs will never achieve non-zero reward from $ s_0$. 

Note that DT-style learning where we condition on the entire history of states rather than just the current state can perform even worse since $ P_{data}(a_1|s_0, a_0, s_2, g=1) = 0$, i.e. even using the non-standard conditioning function described above will not fix things. 
Also, it is worth mentioning that it is possible that conditioning on the out-of-distribution return $ g=1$ from $ s_0$ could work due to extrapolation of a neural network policy. However, as we will see in the experiments section, this does not happen empirically in controlled settings.  

\subsection{Experiments}

The above example does not take into account the possibility of generalization out of distribution (i.e. when conditioning on returns that were not observed in the data). To test whether generalization could lead to stitching we construct two datasets: stitch-easy and stitch-hard. Both datasets use the same simple point-mass environment with sparse rewards as before, but now we introduce a wall into the environment to limit the paths to the goal. The stitch-easy dataset contains two types of trajectories: some beginning from the initial state region and moving upwards (with added Gaussian noise in the actions) and some beginning from the left side of the environment and moving towards the goal (with added Gaussian noise in the actions). This is ``easy'' since just following the behavior policy for the first half of the trajectory leads to states where the dataset indicates how to reach the goal. We also create the stitch-hard dataset which includes a third type of trajectory that begins from the initial state and goes to the right (mirroring the tabular example). This is ``hard'' since the dominant action from the behavior in the initial state is now to go right rather than to move towards the goal-reaching trajectories. This acts as a distraction for methods that are biased towards the behavior. Datasets and results are illustrated in Figure \ref{fig:stitch-experiment}.

\begin{figure}[h]
    \centering
    \begin{tabular}{cc}
        \includegraphics[width=0.3\textwidth]{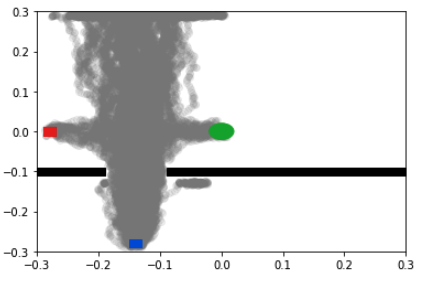} &  \includegraphics[width=0.3\textwidth]{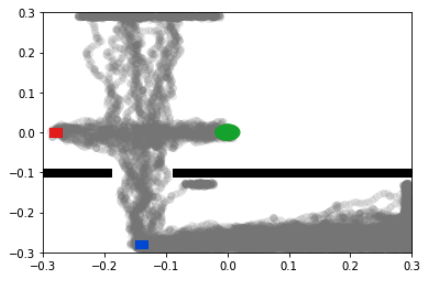} \\
          \includegraphics[width=0.3\textwidth]{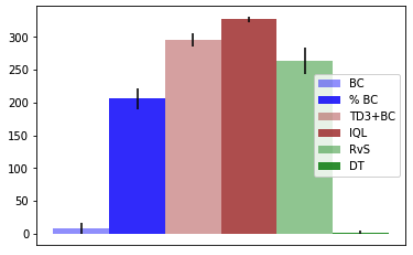} & \includegraphics[width=0.3\textwidth]{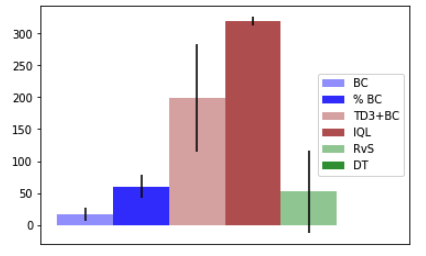}\\
         (a) Stitch-easy & (b) Stitch-hard
    \end{tabular}
    
    \caption{Results on two datasets that require stitching.}
    \label{fig:stitch-experiment}
\end{figure}

We see that on the stitch-easy dataset RvS is able to perform almost as well as the DP algorithms and better than \%BC. This indicates that it is able to follow the behavior when conditioning on an out-of-distribution return until it reaches a state where the return directs it to the goal. In contrast, DT totally fails on this task since it conditions on the entire history of the trajectory. Since the dataset only contains trajectories from the initial state that continue to the top of the board, DT always reproduces such trajectories from the initial state and does not stitch together trajectories. 

In the stitch-hard dataset, we see that RvS fails as well and does not outperform \%BC. This indicates that indeed, RvS can be distracted by the distractor trajectories from the initial state. The conditioning itself was not what cause the stitching in the stitch-easy dataset, but rather the learned policy simply defaults to the behavior. This can be beneficial in some problems, but prevents trajectory stitching that might allow the learned policy to dramatically outperform the behavior. 
TD3+BC also struggles here, likely due to some combination of instability and the BC regularization causing issues due to the distractor actions.

\section{Proofs}\label{app:proof}

\subsection{Proof of Theorem \ref{thm:infinite}}\label{app:infinite}

\begin{proof}
Let $ g(s_1, a_{1:H}) $ be the value of the return by rolling out the open loop sequence of actions $ a_{1:H}$ under the deterministic dynamics induced by $ T $ and $ r$. Then we can write
\begin{align}
    \E_{s_1}[f(s_1)] - J(\pi_f)  &= \E_{s_1}\left[\E_{\pi_f|s_1} [f(s_1)  - g_1] \right] \\
    &= \E_{s_1} \left[\E_{a_{1:H} \sim \pi_f|s_1} [f(s_1) - g(s_1, a_{1:H}) ] \right] \\&\qquad+ \E_{s_1}\left[\E_{a_{1:H} \sim \pi_f|s_1} [ g(s_1, a_{1:H}) - g_1 ] \right]\\
    &\leq \E_{s_1} \left[\E_{a_{1:H} \sim \pi_f|s_1} [f(s_1) - g(s_1, a_{1:H})] \right] + \epsilon H^2. \label{eq:first}
\end{align}
where the last step follows by bounding the magnitude of the difference between $ g_1 $ and $ g(s_1, a_{1:H})$ by $H $ and applying a union bound over the $ H $ steps in the trajectory (using the near determinism assumption), namely:
\begin{align}
        H \cdot \sup_{s_1} \bigcup_{t} P_{a_t \sim \pi_f|s_1}(r_t \neq r(s_t, a_t) \text{ or } s_{t+1} \neq T(s_t, a_t)) \leq \epsilon H^2.
\end{align}
Now we consider the first term from eq. (\ref{eq:first}). Again bounding the magnitude of the difference by $H$ we get that
\begin{align}
    \E_{s_1} \left[\E_{a_{1:H} \sim \pi_f|s_1} [f(s_1) - g(s_1, a_{1:H})] \right] &\leq \E_{s_1} \int_{a_{1:H}} P_{\pi_f}(a_{1:H}|s_1) \1[g(s_1, a_{1:H}) \neq f(s_1)] H\label{eq:second}
\end{align}
To bound this term, we will more carefully consider what happens under the distribution $ P_{\pi_f}$. 

To simplify notation, let $ \bar s_t = T(s_1, a_{1:t-1})$ be the result of following the deterministic dynamics defined by $ T $ up until step $ t$.
Expanding it out, applying the near determinism, the consistency of $ f $, the coverage assumption, canceling some terms, and then inducting we see that:
\begin{align}
    &P_{\pi_f}(a_{1:H}|s_1) = \pi_f(a_1|s_1) \int_{s_2} P(s_2|s_1, a_1) P_{\pi_f}(a_{2:H}|s_1, s_2)\\
    &\leq \pi_f(a_1|s_1) P_{\pi_f}(a_{2:H}|s_1, \bar s_2) + \epsilon\\
    &=  \beta(a_1|s_1)\frac{P_\beta(g_1 = f(s_1)|s_1, a_1)}{P_\beta(g_1 = f(s_1) | s_1)} P_{\pi_f}(a_{2:H}|s_1, \bar s_2) + \epsilon\\
    &\leq \beta(a_1|s_1)\frac{\epsilon + P_\beta(g_1 - r(s_1, a_1) = f(s_1) - r(s_1, a_1)|s_1, a_1, \bar s_2)}{P_\beta(g_1 = f(s_1) | s_1)} P_{\pi_f}(a_{2:H}|s_1, \bar s_2) + \epsilon\\
    &= \beta(a_1|s_1)\frac{\epsilon + P_\beta(g_2 = f(\bar s_2)| \bar s_2)}{P_\beta(g_1 = f(s_1) | s_1)} P_{\pi_f}(a_{2:H}|s_1, \bar s_2) + \epsilon\\
    &\leq \beta(a_1|s_1)\frac{P_\beta(g_2 = f(\bar s_2)| \bar s_2)}{P_\beta(g_1 = f(s_1) | s_1)} P_{\pi_f}(a_{2:H}|s_1, \bar s_2) + \epsilon\left(\frac{1}{\alpha_f} + 1\right)\\
    &\leq \beta(a_1|s_1)\beta(a_2| \bar s_2) \frac{\cancel{ P_\beta(g_2 = f(\bar s_2)| \bar s_2)}}{P_\beta(g_1 = f(s_1) | s_1)}\cdot \frac{P_\beta(g_2 = f(\bar s_2)| \bar s_2, a_2)}{\cancel{P_\beta(g_2 = f(\bar s_2)| \bar s_2)}} P_{\pi_f}(a_{3:H}|s_1, \bar s_3)) \\
    &\qquad+ 2 \epsilon\left(\frac{1}{\alpha_f} + 1\right)\\
    &\leq \prod_{t=1}^H \beta(a_t| \bar s_t)  \frac{P_\beta(g_H = f(\bar s_H) | \bar s_H, a_h)}{P_\beta(g_1 = f(s_1) | s_1)} + H\epsilon\left(\frac{1}{\alpha_f} + 1\right)\\
    &=  \prod_{t=1}^H \beta(a_t| \bar s_t) \frac{\1[g(s_1, a_{1:H}) =  f(s_1)]}{P_\beta(g_1 = f(s_1) | s_1)} + H\epsilon\left(\frac{1}{\alpha_f} + 1\right)
\end{align}
where the last step follows from the determinism of the trajectory that determines $ \bar s_H$ and the consistency of $ f$.
Plugging this back into eq (\ref{eq:second}) and noticing that the two indicator functions can never both be 1, we get that:
\begin{align}
     \E_{s_1} \left[\E_{a_{1:H} \sim \pi_f|s_1} [ f(s_1) - g(s_1, a_{1:H})] \right] \leq H^2\epsilon\left(\frac{1}{\alpha_f} + 1\right)
\end{align}
Plugging this back into eq (\ref{eq:first}) yields the result.
\end{proof}

%\answerTODO{}: Add formal tightness proof

\subsection{Proof of Corollary \ref{cor:soft-infinite}}\label{app:soft-infinite}

\begin{proof}
We need to define a function $ f $ so that $ \E[f(s_1)] $ is approximately $ J(\pi^*)$. To do this, note that there exists a deterministic optimal policy $ \pi^*$, and since the environment dynamics are nearly deterministic we can set $ f(s_1) $ to be the return of $ \pi^* $ under the deterministic dynamics. To do this, let $ T^{\pi^*}(s_1, t)$ represent the state reached by running $ \pi^*$ from $ s_1$ for $ t $ steps under the deterministic dynamics defined by $ T$. Then:
\begin{align}
    f(s_1) = \sum_{t=1}^H r(T^{\pi^*}( s_1, t), \pi^*(T^{\pi^*}(s_1, t)) )
\end{align}
Now we have as in the proof of Theorem \ref{thm:infinite} that the probability that $ g \neq f(s)$ is bounded by $ \epsilon H$, so that
\begin{align}
    \E_{s_1}[f(s_1)] - J(\pi^*) = \E_{s_1}[\E_{g \sim \pi^*|s_1}[f(s_1) - g]] \leq \E_{s_1}[P_{\pi^*}(g\neq f(s_1)|s_1)\cdot H] \leq \epsilon H^2
\end{align}
Combining this with Theorem \ref{thm:infinite} yields the result.
\end{proof}

\subsection{Proof of Theorem \ref{thm:finite}}\label{app:finite}

First we prove the following Lemma. This can be seen as a finite-horizon analog to results from \citet{achiam2017constrained}.

\begin{lemma}\label{lem:l1}
Let $ d_\pi $ refer to the marginal distribution of $ P_\pi$ over states only. For any two policies $ \pi, \pi'$ we have:
\begin{align}
    \|d_\pi - d_{\pi'} \|_1 \leq 2 H \cdot \E_{s \sim d_\pi}[TV(\pi(\cdot|s)\| \hat \pi'(\cdot |s))]
\end{align}
\end{lemma}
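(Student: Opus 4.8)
The plan is to prove the bound by a telescoping argument over the per-timestep state marginals, exploiting that the two policies share the same transition kernel so that their state distributions can only drift apart through differences in their action distributions. Write $d_\pi^t$ for the distribution over states at step $t$ under $\pi$, so that $d_\pi = \sum_{t=1}^H d_\pi^t$ (the state marginal is the sum of the per-step occupancies; the argument goes through verbatim if $d_\pi$ is instead the time-averaged occupancy, since the extra $1/H$ cancels the factor $H$ produced below). Let $P^\pi$ denote the linear state-to-state transition operator induced by $\pi$, i.e. $(P^\pi d)(s') = \sum_{s,a} d(s)\,\pi(a|s)\, P(s'|s,a)$, so that $d_\pi^{t+1} = P^\pi d_\pi^t$ and likewise for $\pi'$. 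The two policies share the initial-state distribution, giving $d_\pi^1 = d_{\pi'}^1$.

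First I would establish a one-step recursion for $\|d_\pi^t - d_{\pi'}^t\|_1$. The crucial point is how to split the difference so that the resulting total-variation term is measured under $d_\pi$ (the first policy), as the statement requires, rather than under $d_{\pi'}$. To this end I write
\[
d_\pi^{t+1} - d_{\pi'}^{t+1} = \big(P^\pi d_\pi^t - P^{\pi'} d_\pi^t\big) + \big(P^{\pi'} d_\pi^t - P^{\pi'} d_{\pi'}^t\big).
\]
The second bracket equals $P^{\pi'}(d_\pi^t - d_{\pi'}^t)$, and since $P^{\pi'}$ is a stochastic (Markov) operator it is non-expansive in $L_1$, so this term contributes at most $\|d_\pi^t - d_{\pi'}^t\|_1$. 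For the first bracket, expanding the $L_1$ norm and using $\sum_{s'} P(s'|s,a) = 1$ to collapse the transition kernel leaves $\sum_s d_\pi^t(s) \sum_a |\pi(a|s) - \pi'(a|s)| = 2\,\E_{s \sim d_\pi^t}[TV(\pi(\cdot|s)\|\pi'(\cdot|s))]$. Combining the two contributions gives the recursion
\[
\|d_\pi^{t+1} - d_{\pi'}^{t+1}\|_1 \leq \|d_\pi^t - d_{\pi'}^t\|_1 + 2\,\E_{s \sim d_\pi^t}[TV(\pi(\cdot|s)\|\pi'(\cdot|s))].
\]

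Next I would unroll this recursion from $\|d_\pi^1 - d_{\pi'}^1\|_1 = 0$, obtaining $\|d_\pi^t - d_{\pi'}^t\|_1 \leq 2 \sum_{t' < t} \E_{s \sim d_\pi^{t'}}[TV(\cdot)]$. Then, applying the triangle inequality $\|d_\pi - d_{\pi'}\|_1 \leq \sum_{t=1}^H \|d_\pi^t - d_{\pi'}^t\|_1$ and exchanging the order of summation, each term $\E_{s \sim d_\pi^{t'}}[TV(\cdot)]$ is counted at most $H$ times. Since $\sum_{t'} \E_{s \sim d_\pi^{t'}}[TV(\cdot)] = \E_{s \sim d_\pi}[TV(\cdot)]$ by the definition of $d_\pi$, this collects the claimed factor and yields $\|d_\pi - d_{\pi'}\|_1 \leq 2H\, \E_{s\sim d_\pi}[TV(\pi(\cdot|s)\|\pi'(\cdot|s))]$.

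I expect the only real subtlety — rather than a genuine obstacle — to be the bookkeeping in the split above: one must add and subtract $P^{\pi'} d_\pi^t$ (not $P^\pi d_{\pi'}^t$) so that the non-expansive operator acts on the running difference while the policy mismatch is evaluated against $d_\pi^t$, matching the $\E_{s\sim d_\pi}$ in the statement. The remaining steps — the collapse of the transition kernel and the counting that produces the factor $H$ from the double sum — are routine.
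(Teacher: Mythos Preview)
Your proposal is correct and follows essentially the same route as the paper's own proof: the paper also defines per-step marginals $d_\pi^h$, establishes the one-step recursion $\Delta_h \leq \Delta_{h-1} + \delta_{h-1}$ by adding and subtracting exactly the term $P^{\pi'} d_\pi^{h-1}$ (written out in integral form), and then unrolls and collects the factor $H$ from the double sum over timesteps. The only cosmetic difference is that the paper works with the time-averaged occupancy $d_\pi = \tfrac{1}{H}\sum_h d_\pi^h$ throughout, which you already note is equivalent.
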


\begin{proof}
    First we will define a few useful objects. Let $ d_\pi^h(s) = P_\pi(s_h = s)$. Let $ \Delta_h = \|d_\pi^h(s) - d_{\pi'}^h(s)\|_1$. Let $ \delta_h = 2 \E_{s\sim d_\pi^h}[TV(\pi(\cdot|s)\| \hat \pi'(\cdot |s))]$.
    
    Now we claim that $ \Delta_h \leq \delta_{h-1} + \Delta_{h-1}$ for $ h > 1$ and $ \Delta_1 = 0$.
    
    To see this, consider some fixed $ h$. Note that $ d_\pi^h(s) = \int_{s'}d_\pi^{h-1}(s')\int_{a'} \pi(a'|s')P(s|s', a') $. Then expanding the definitions and adding and subtracting we see that 
    \begin{align}
        \Delta_h &= \int_s |d_\pi^h(s) - d_{\pi'}^h(s)| \\
        &\leq \int_s \bigg|\int_{s'}d_\pi^{h-1}(s')\int_{a'} (\pi(a'|s') - \pi'(a'|s')) P(s|s', a') \bigg| \\&\qquad + \int_s \bigg|\int_{s'}(d_\pi^{h-1}(s') - d_{\pi'}^{h-1}(s')) \int_{a'} \pi'(a'|s') P(s|s', a') \bigg|\\
        &\leq 2 \E_{s\sim d_\pi^{h-1}}[TV(\pi(\cdot|s)\| \hat \pi'(\cdot |s))] + \|d_\pi^{h-1} - d_{\pi'}^{h-1}\|_1 = \delta_{h-1} + \Delta_{h-1}.
    \end{align}
    
    Now applying the claim and the definition of $ d_\pi$ we get that 
    \begin{align}
        \|d_\pi - d_{\pi'}\|_1 \leq \frac{1}{H}\sum_{h=1}^H \Delta_h \leq \frac{1}{H}\sum_{h=1}^H \sum_{j=1}^{h-1} \delta_j \leq H\frac{1}{H}\sum_{h=1}^H \delta_h = 2 H \cdot \E_{s \sim d_\pi}[TV(\pi(\cdot|s)\| \hat \pi'(\cdot |s))].
    \end{align}
\end{proof}

Now we can prove the Theorem.
\begin{proof}
Applying the definition of $J$ and Lemma \ref{lem:l1}, we get
\begin{align}
     J(\pi_f) - J(\hat \pi_f) &= H (\E_{P_{\pi_f}}[r(s,a)] - \E_{P_{\hat \pi_f}}[r(s,a)] )\\
     &\leq H \| d_{\pi_f} - d_{\hat \pi_f} \|_1\\
     &\leq 2 \cdot \E_{s \sim d_{\pi_f}}[TV(\pi_f(\cdot|s) \| \hat \pi_f(\cdot|s))] H^2
\end{align}
Expanding definitions, using the multiply and divide trick, and applying the assumptions:
\begin{align}
    2 \cdot \E_{s \sim d_{\pi_f}}[TV(\pi_f(\cdot|s) \| \hat \pi_f(\cdot|s))] &=  \E_{s \sim d_{\pi_f}} \left[ \int_a |P_\beta(a|s, f(s)) - \hat \pi(a|s, f(s))|   \right]\\
    &= \E_{s \sim d_{\pi_f}} \left[ \frac{P_\beta(f(s)|s)}{P_\beta(f(s)|s)} \int_a  |P_\beta(a|s, f(s)) - \hat \pi(a|s, f(s))| \right]\\
    &\leq  \frac{C_f}{\alpha_f} \E_{s \sim d_{\beta}}\left[ P_\beta(f(s)|s) \int_a  |P_\beta(a|s, f(s)) - \hat \pi(a|s, f(s))|\right]\\
    &\leq \frac{C_f}{\alpha_f} \E_{s \sim d_{\beta}}\left[\int_g P_\beta(g|s) \int_a  |P_\beta(a|s, g) - \hat \pi(a|s, g)|\right]\\
    &= 2 \frac{C_f}{\alpha_f} \E_{s \sim d_{\pi_f}, g \sim P_\beta|s}[TV(P_\beta(\cdot|s, g) \| \hat \pi(\cdot|s, g))]\\
    &\leq \frac{C_f}{\alpha_f} \sqrt{2 L(\hat \pi)}
\end{align}
where the last step comes from Pinsker's inequality.
Combining with the above bound on the difference in expected values yields the result. 
\end{proof}

\subsection{Proof of Corollary~\ref{cor:finite}}
\label{cor:finite_proof}
\begin{proof}
We may write~$L(\pi) = \bar L(\pi) - H_\beta$, where~$H_\beta = -\E_{(s,a,g) \sim P_\beta}[\log P_\beta(a|s,g)]$ and
\[
\bar L(\pi) := - \E_{(s, a, g) \sim P_\beta}[\log \pi(a|s,g)]
\]
is the cross-entropy loss.
Denoting~$\pi^\dagger \in \arg\min_{\pi \in \Pi} L(\pi)$, we have
\begin{align*}
    L(\hat \pi) &= L(\hat \pi) - L(\pi^\dagger) + L(\pi^\dagger) \leq \bar L(\hat \pi) - \bar L(\pi^\dagger) + \epsilon_{approx}.
\end{align*}
Denoting~$\hat L$ the empirical cross-entropy loss that is minimized by~$\hat \pi$, we may further decompose
\begin{align*}
    \bar L(\hat \pi) - \bar L(\pi^\dagger) &= \bar L(\hat \pi) - \hat L(\hat \pi) + \hat L(\hat \pi) - \hat L(\pi^\dagger) + \hat L(\pi^\dagger) - \bar L(\pi^\dagger) \\
    &\leq 2 \sup_{\pi \in \Pi} |\bar L(\pi) - \hat L(\pi)|
\end{align*}
Under the assumptions on bounded loss differences, we may bound this, e.g., using McDiarmid's inequality and a union bound on~$\Pi$ to obtain the final result.
\end{proof}

\subsection{Top-\% BC}\label{app:pbc}

\begin{theorem}[Alignment with respect to quantile]
 Let $ g_\rho $ be the $ 1-\rho$ quantile of the return distribution induced by $ \beta$ over all initial states. Let $ \pi_\rho = P_\beta(a|s, g \geq g_\rho)$. Assume the following:
\begin{enumerate}
    \item Coverage: $P_\beta(s_1|g\geq g_\rho) \geq \alpha_\rho $ for all initial states $ s_1$.
    \item Near determinism: $ P(r \neq r(s, a) \text{ or } s' \neq T(s, a) | s,a ) \leq \epsilon$ at all $ s, a $ for some functions $ T$ and $ r $. Note that this does not constrain the stochasticity of the initial state at all.
\end{enumerate}
Then
\begin{align}
    g_\rho - J(\pi_\rho) \leq \epsilon\left( \frac{1}{\alpha_\rho} + 2\right)H^2.
\end{align}
\end{theorem}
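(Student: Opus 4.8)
The plan is to follow the proof of Theorem~\ref{thm:infinite} essentially verbatim, with the constant $g_\rho$ playing the role of the conditioning value $f(s_1)$ and the \emph{residual threshold} $\bar g_t := g_\rho - \sum_{t'<t} r(\bar s_{t'}, a_{t'})$ playing the role of $f(\bar s_t)$ along the deterministic path. The one conceptual novelty is that the consistency assumption of Theorem~\ref{thm:infinite} is no longer needed: since $\pi_\rho$ conditions on the whole-trajectory return $g = \sum_{t=1}^H r_t$ clearing $g_\rho$, under the deterministic dynamics the event $\{g \geq g_\rho\}$ coincides with the return-to-go from step $t$ exceeding $\bar g_t$. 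In other words, the residual threshold automatically decreases by the observed reward, so the analog of consistency holds for free, which is exactly why the theorem drops one assumption relative to Theorem~\ref{thm:infinite}. Throughout, the equality events $\{g_t = f(\bar s_t)\}$ of the original proof become the threshold events $\{g_t \geq \bar g_t\}$.

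Concretely, I would first write $g_\rho - J(\pi_\rho) = \E_{s_1}\E_{a_{1:H}\sim \pi_\rho|s_1}[g_\rho - g_1]$ (using that $g_\rho$ is a constant) and split $g_\rho - g_1 = (g_\rho - g(s_1, a_{1:H})) + (g(s_1, a_{1:H}) - g_1)$, where $g(s_1, a_{1:H})$ is the open-loop return under the deterministic $T, r$. The second term is controlled exactly as in Theorem~\ref{thm:infinite}: a union bound over the $H$ steps against near determinism bounds the probability that the realized trajectory deviates from the deterministic one, and each deviation costs at most $H$, giving $\epsilon H^2$.

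For the first term, the key substitution is the inequality $g_\rho - g(s_1, a_{1:H}) \leq H\cdot \1[g(s_1, a_{1:H}) < g_\rho]$, which replaces the equality-based indicator of the original proof: the difference is nonpositive once the open-loop return clears the threshold, and at most $H$ otherwise. I would then reproduce the inductive reweighting bound on $P_{\pi_\rho}(a_{1:H}|s_1)$. At each step the reweighting factor $\beta(a_t|\bar s_t)\,P_\beta(g\geq g_\rho|\bar s_t, a_t)/P_\beta(g\geq g_\rho|\bar s_t)$ is rewritten, via near determinism and the definition of $\bar g_t$, as $\beta(a_t|\bar s_t)\,P_\beta(g_t\geq \bar g_t|\bar s_t, a_t)/P_\beta(g_t\geq \bar g_t|\bar s_t)$ up to an additive $\epsilon$; the marginal factors $P_\beta(g_t\geq \bar g_t|\bar s_t)$ then telescope across consecutive steps. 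This leaves $\prod_{t}\beta(a_t|\bar s_t)\,P_\beta(g_H\geq \bar g_H|\bar s_H, a_H)/P_\beta(g\geq g_\rho|s_1)$ plus accumulated error $H\epsilon(1/\alpha_\rho + 1)$, where the return-coverage assumption bounds the surviving denominator by $1/P_\beta(g\geq g_\rho|s_1) \leq 1/\alpha_\rho$ (matching the coverage notion stated in the main text).

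Finally, I would observe that $P_\beta(g_H \geq \bar g_H|\bar s_H, a_H)$ reduces under determinism to $\1[r(\bar s_H, a_H)\geq \bar g_H] = \1[g(s_1, a_{1:H})\geq g_\rho]$, by the telescoping definition of $\bar g_H$. Multiplying the product term by the $\1[g(s_1, a_{1:H}) < g_\rho]$ from the first-term bound gives zero, so only the $H\epsilon(1/\alpha_\rho+1)$ error survives; multiplied by the outer factor $H$ it yields $H^2\epsilon(1/\alpha_\rho+1)$, and adding back the $\epsilon H^2$ from the closed-loop/open-loop gap gives the claimed $\epsilon(1/\alpha_\rho + 2)H^2$. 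I expect the main obstacle to be bookkeeping rather than conceptual: I must verify that replacing the equality events by threshold events preserves both the telescoping cancellation of the intermediate marginals and the disjointness of the two indicators $\1[g\geq g_\rho]$ and $\1[g<g_\rho]$ that kills the leading term, while carefully tracking that $\bar g_t$ stays consistent with $g_\rho$ along the path, which is precisely where the dropped consistency assumption silently re-enters.
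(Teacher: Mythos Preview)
Your proposal is correct and follows essentially the same route as the paper's own proof. The only cosmetic difference is bookkeeping: the paper keeps the fixed event $\{g_1 \geq g_\rho\}$ and carries the deterministic past rewards $\bar r_{1:t-1}$ in the conditioning set (writing $P_\beta(g_1 \geq g_\rho \mid \bar s_t, \bar r_{1:t-1}, a_t)$), whereas you absorb those rewards into the residual threshold $\bar g_t$ and track $P_\beta(g_t \geq \bar g_t \mid \bar s_t, a_t)$; these are the same object under the Markov and near-determinism assumptions, and the telescoping, the disjointness of the two indicators, and the final arithmetic go through identically. Your observation that the proof actually uses $P_\beta(g\geq g_\rho\mid s_1)\geq \alpha_\rho$ (the main-text coverage notion) rather than the $P_\beta(s_1\mid g\geq g_\rho)\geq \alpha_\rho$ stated in the appendix is also accurate.
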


\begin{proof}
The proof essentially follows the same argument as Theorem \ref{thm:infinite} with $ f(s_1)$ replaced by $ g_\rho$. The main difference comes from the fact that 
\begin{align}
    \pi_\rho(a|s) = P_\beta(a|s, g \geq g_\rho) = \beta(a|s) \frac{P_\beta(g\geq g_\rho | s,a)}{P_\beta(g\geq g_\rho|s)}
\end{align}
Explicitly, we have similar to before that:
\begin{align}
    g_\rho - J(\pi_\rho) &= \E_{s_1}[\E_{\pi_\rho|s_1}[g_\rho - g_1]]\\
    &\leq \E_{s_1}\E_{a_{1:H}\sim \pi_f|s_1}[g_\rho - g(s_1, a_{1:H})] + \epsilon H^2.\\
    &\leq \E_{s_1}\E_{a_{1:H}\sim \pi_f|s_1}[\1[g(s_1, a_{1:H}) < g_\rho]] \cdot H + \epsilon H^2.\label{eq:pbc-proof}
\end{align}
We now define $ \bar s_t = T(s_1, a_{1:t-1})$ to be the state at step $ t$ under the determinisitic dynamics and similarly $ \bar r_t = r(\bar s_t, a_t)$ the reward under deterministic dynamics. 
Then again mirroring the proof above, we have that 
\begin{align}
    &P_{\pi_\rho}(a_{1:H}|s_1) \leq \pi_\rho(a_1|s_1) P_{\pi_\rho}(a_{2:H}|s_1, \bar s_2, \bar r_1) + \epsilon\\
    &= \beta(a_1|s_1) \frac{P_\beta(g_1 \geq g_\rho|s_1, a_1)}{P_\beta(g_1 \geq g_\rho|s_1)}P_{\pi_\rho}(a_{2:H}|s_1, \bar s_2, \bar r_1) + \epsilon\\
    &\leq \beta(a_1|s_1) \frac{\epsilon + P_\beta(g_1 \geq g_\rho|\bar s_2, \bar r_1, a_1)}{P_\beta(g_1 \geq g_\rho|s_1)}P_{\pi_\rho}(a_{2:H}|s_1, \bar s_2, \bar r_1) + \epsilon\\
    &\leq \beta(a_1|s_1) \frac{P_\beta(g_1 \geq g_\rho|s_1, a_1)}{P_\beta(g_1 \geq g_\rho|s_1)}P_{\pi_\rho}(a_{2:H}|s_1, \bar s_2, \bar r_1) + \epsilon\\
    &\leq \beta(a_1|s_1) \frac{P_\beta(g_1 \geq g_\rho|\bar s_2, \bar r_1, a_1)}{P_\beta(g_1 \geq g_\rho|s_1)}P_{\pi_\rho}(a_{2:H}|s_1, \bar s_2, \bar r_1) + \epsilon \left(\frac{1}{\alpha_\rho} + 1\right)\\
    &\leq \beta(a_1|s_1)\beta(a_2|\bar s_2) \frac{\cancel{P_\beta(g_1 \geq g_\rho|\bar s_2, \bar r_1)}}{P_\beta(g_1 \geq g_\rho|s_1)}\frac{P_\beta(g_1 \geq g_\rho|\bar s_2, \bar r_1, a_2)}{\cancel{P_\beta(g_1 \geq g_\rho|\bar s_2, \bar r_1)}} P_{\pi_\rho}(a_{3:H}|s_1, \bar s_3, \bar r_{1:2})\\ &\qquad+ 2 \epsilon \left(\frac{1}{\alpha_\rho} + 1\right)\\
    &\leq \prod_{t=1}^H \beta(a_t|\bar s_t) \frac{P_\beta(g_1 \geq g_\rho|\bar s_H, \bar r_{1:H})}{P_\beta(g_1 \geq g_\rho|s_1)} + H \epsilon \left(\frac{1}{\alpha_\rho} + 1\right)\\
    &= \prod_{t=1}^H \beta(a_t|\bar s_t) \frac{\1[g(s_1, a_{1:H}) \geq g_\rho]}{P_\beta(g_1 \geq g_\rho|s_1)} + H \epsilon \left(\frac{1}{\alpha_\rho} + 1\right)
\end{align}
Plugging this into Equation \ref{eq:pbc-proof} we get the result.
\end{proof}

\begin{theorem}[Reduction of \%BC to SL]\label{thm:pbc-finite}
Let $ g_\rho $ be the $ 1-\rho$ percentile of the return distribution induced by $ \beta$. Let $ \pi_\rho = P_\beta(a|s, g \geq g_\rho)$. 
Assume
\begin{enumerate}
    \item Bounded mismatch: $\frac{P_{\pi_\rho}(s)}{P_\beta(s|g\geq g_\rho)} \leq C_\rho $ for all $ s$.
\end{enumerate}
Define the expected loss as $L_\rho(\hat \pi) = \E_{s \sim P_\beta|g\geq g_\rho}[KL(\pi_\rho(\cdot|s)\| \hat \pi(\cdot |s))]$. Then we have that 
\begin{align}
    J(\pi_\rho) - J(\hat \pi) \leq  C_\rho H^2 \sqrt{2 L_\rho(\hat \pi)}. 
\end{align}
\end{theorem}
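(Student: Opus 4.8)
The plan is to mirror the proof of Theorem~\ref{thm:finite} essentially verbatim, but with one structural simplification: since the top-\% BC loss $L_\rho$ is defined directly against the target policy $\pi_\rho$ (rather than against $P_\beta(\cdot|s,g)$ averaged over $g$), the multiply-and-divide step involving the return density drops out, and with it the return-coverage factor $1/\alpha_\rho$. So the reduction here will be clean and will not inherit the $1/\alpha_\rho$ dependence that appears in the alignment theorem.

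First I would convert the return gap into an $L_1$ distance between state occupancies. Writing $J(\pi_\rho) - J(\hat\pi) = H\big(\E_{P_{\pi_\rho}}[r(s,a)] - \E_{P_{\hat\pi}}[r(s,a)]\big)$ and bounding the per-step reward in $[0,1]$, Lemma~\ref{lem:l1} applied with $\pi = \pi_\rho$ and $\pi' = \hat\pi$ gives $J(\pi_\rho) - J(\hat\pi) \leq H\|d_{\pi_\rho} - d_{\hat\pi}\|_1 \leq 2 H^2\, \E_{s\sim d_{\pi_\rho}}[TV(\pi_\rho(\cdot|s)\|\hat\pi(\cdot|s))]$.

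Next I would change measure from the occupancy $d_{\pi_\rho}$ to the data distribution $P_\beta(\cdot|g\geq g_\rho)$ on which $L_\rho$ is evaluated. The bounded-mismatch assumption $d_{\pi_\rho}(s)/P_\beta(s|g\geq g_\rho)\leq C_\rho$ yields $\E_{s\sim d_{\pi_\rho}}[TV(\pi_\rho\|\hat\pi)] \leq C_\rho\,\E_{s\sim P_\beta|g\geq g_\rho}[TV(\pi_\rho\|\hat\pi)]$. This is the exact point at which the \rvs argument needed the extra density ratio $P_\beta(f(s)|s)$ and the coverage lower bound $\alpha_f$; here the target is already $\pi_\rho = P_\beta(\cdot|s,g\geq g_\rho)$, so the bound passes straight through with only $C_\rho$.

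Finally I would move from TV to KL via Pinsker's inequality, $TV(\pi_\rho(\cdot|s)\|\hat\pi(\cdot|s)) \leq \sqrt{\tfrac12 KL(\pi_\rho(\cdot|s)\|\hat\pi(\cdot|s))}$, and then use Jensen's inequality (concavity of $\sqrt{\cdot}$) to pull the state expectation inside the root, giving $\E_{s\sim P_\beta|g\geq g_\rho}[TV] \leq \sqrt{\tfrac12 L_\rho(\hat\pi)}$. Chaining the three bounds produces $J(\pi_\rho)-J(\hat\pi)\leq C_\rho H^2\sqrt{2 L_\rho(\hat\pi)}$. The only nonroutine ingredient is Lemma~\ref{lem:l1}, which is already proved; the remaining manipulations are the same Pinsker/Jensen step as in Theorem~\ref{thm:finite}, so I do not anticipate a genuine obstacle beyond tracking constants carefully, so that the factor $2$ from Lemma~\ref{lem:l1} and the $\tfrac12$ from Pinsker combine to give exactly the stated $\sqrt{2 L_\rho(\hat\pi)}$.
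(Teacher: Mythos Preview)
Your proposal is correct and follows essentially the same route as the paper: apply Lemma~\ref{lem:l1} to bound $J(\pi_\rho)-J(\hat\pi)$ by $2H^2\,\E_{s\sim d_{\pi_\rho}}[TV(\pi_\rho\|\hat\pi)]$, change measure to $P_\beta(\cdot\mid g\geq g_\rho)$ via the bounded-mismatch assumption, and finish with Pinsker plus Jensen. Your observation that the $1/\alpha_f$ factor from Theorem~\ref{thm:finite} disappears here because the target is already $\pi_\rho$ (so no multiply-and-divide by $P_\beta(f(s)\mid s)$ is needed) is exactly the right structural point.
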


\begin{proof}
Recall that $ d_\pi $ refers to the marginal distribution of $ P_\pi$ over states only.
Applying the definition of $J$ and Lemma \ref{lem:l1}, we get
\begin{align}
     J(\pi_\rho) - J(\hat \pi) &= H (\E_{P_{\pi_\rho}}[r(s,a)] - \E_{P_{\hat \pi}}[r(s,a)] )\\
     &\leq H \| d_{\pi_\rho} - d_{\hat \pi} \|_1\\
     &\leq 2 \cdot \E_{s \sim d_{\pi_\rho}}[TV(\pi_\rho(\cdot|s) \| \hat \pi(\cdot|s))] H^2
\end{align}
Expanding definitions, using the multiply and divide trick, and applying the assumptions:
\begin{align}
    2 \cdot \E_{s \sim d_{\pi_\rho}}[TV(\pi_\rho(\cdot|s) \| \hat \pi(\cdot|s))] 
    &\leq  C_\rho \cdot 2 \E_{s \sim P_\beta(\cdot|g \geq g_\rho)}[TV(\pi_\rho(\cdot|s) \| \hat \pi(\cdot|s))]\\
    &\leq C_\rho \sqrt{2 L(\hat \pi)}
\end{align}
where the last step comes from Pinsker's inequality.
Combining with the above bound on the difference in expected values yields the result. 
\end{proof}

\begin{corollary}[Sample complexity for \%BC]
To get finite data guarantees, add to the above assumptions the assumptions that (1) the policy class $ \Pi$ is finite, (2) $|\log \pi(a|s) - \log \pi(a'|s')| \leq c$ for any~$(a, s, a', s')$ and all~$\pi \in \Pi$, and (3) the approximation error of $ \Pi$ is bounded by $ \epsilon_{approx}$, i.e. $ \min_{\pi\in \Pi}L_\rho(\pi) \leq \epsilon_{approx}$. Then with probability at least $ 1-\delta$,
\begin{align}
    J(\pi_\rho) - J(\hat \pi) \leq O \left ( C_\rho H^2\left( \sqrt{c} \left(\frac{\log |\Pi| /\delta}{(1-\rho) N}\right)^{1/4} + \sqrt{\epsilon_{approx}} \right) + \frac{\epsilon}{\alpha_\rho} H^2\right). 
\end{align}
\end{corollary}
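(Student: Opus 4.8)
The plan is to follow the proof of Corollary~\ref{cor:finite} almost verbatim, replacing the \rvs reduction (Theorem~\ref{thm:finite}) by its top-\% analog (Theorem~\ref{thm:pbc-finite}) and the conditioning-function loss $L$ by the quantile loss $L_\rho$. Theorem~\ref{thm:pbc-finite} already supplies the reduction $J(\pi_\rho) - J(\hat\pi) \le C_\rho H^2 \sqrt{2 L_\rho(\hat\pi)}$ (note there is no extra $\alpha_\rho$ here, since $L_\rho$ is already taken over the conditional $P_\beta(\cdot|g\ge g_\rho)$), so the whole task reduces to controlling $L_\rho(\hat\pi)$ from finite data and substituting it in.

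First I would rewrite the KL loss as a cross-entropy minus the policy-independent conditional entropy of $\beta$, writing $L_\rho(\pi) = \bar L_\rho(\pi) - H_\beta^\rho$ with $\bar L_\rho(\pi) = -\E_{s \sim P_\beta(\cdot|g\ge g_\rho)}\E_{a}[\log \pi(a|s)]$, exactly as in the proof of Corollary~\ref{cor:finite}. Since $H_\beta^\rho$ is independent of $\pi$, the empirical cross-entropy $\hat L_\rho$ that $\hat\pi$ minimizes induces the same minimizer as the empirical $L_\rho$; setting $\pi^\dagger \in \arg\min_{\pi\in\Pi} L_\rho(\pi)$ gives $L_\rho(\hat\pi) \le \bar L_\rho(\hat\pi) - \bar L_\rho(\pi^\dagger) + \epsilon_{approx}$ by the approximation-error assumption, and adding and subtracting the empirical losses together with optimality of $\hat\pi$ yields $\bar L_\rho(\hat\pi) - \bar L_\rho(\pi^\dagger) \le 2\sup_{\pi\in\Pi}|\bar L_\rho(\pi) - \hat L_\rho(\pi)|$.

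Next I would bound the uniform deviation $\sup_{\pi\in\Pi}|\bar L_\rho(\pi) - \hat L_\rho(\pi)|$ by McDiarmid's inequality and a union bound over the finite class $\Pi$, using the bounded-log-likelihood assumption~(2) to get bounded differences of size $O(c)$. The one genuine change from Corollary~\ref{cor:finite} is the \emph{effective sample size}: $\hat L_\rho$ and $\hat\pi$ only see the trajectories retained by the quantile threshold rather than all $N$, so the concentration rate scales with the retained count, which is precisely the $(1-\rho)N$ factor in the denominator of the stated bound. Substituting the resulting $\sqrt{\cdot}$ rate for $L_\rho(\hat\pi)$ into $C_\rho H^2\sqrt{2L_\rho(\hat\pi)}$ produces the fourth-root dependence and the $\sqrt{\epsilon_{approx}}$ term. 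Finally, to pass from $J(\pi_\rho)$ to the target return $g_\rho$, I would add the alignment bound $g_\rho - J(\pi_\rho) \le \epsilon(\tfrac{1}{\alpha_\rho}+2)H^2$ from the Alignment with respect to quantile theorem, which contributes the additive $\frac{\epsilon}{\alpha_\rho}H^2$ term, exactly mirroring how Corollary~\ref{cor:combined} augments Corollary~\ref{cor:finite}.

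The step I expect to be the main obstacle is making the uniform-convergence argument rigorous on the \emph{data-dependent} retained subset: top-\% BC selects trajectories via an \emph{empirical} quantile $\hat g_\rho$, so the retained set is random and correlated with the very samples entering $\hat L_\rho$, whereas $L_\rho$ is defined against the population conditional $P_\beta(\cdot|g\ge g_\rho)$. A careful treatment must bound the gap between the empirical and population quantiles and argue the retained examples behave like conditionally i.i.d. draws from the population conditional up to lower-order terms; the cleanest route is to treat the retained set as $(1-\rho)N$ such draws and absorb the quantile-estimation error into the $O(\cdot)$, which is what the stated bound implicitly does.
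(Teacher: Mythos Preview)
Your proposal is correct and mirrors exactly the (implicit) proof the paper intends: the corollary is stated without its own proof and is meant to follow Corollary~\ref{cor:finite} line-for-line with Theorem~\ref{thm:pbc-finite} substituted for Theorem~\ref{thm:finite} and the effective sample size replaced by the retained count, then combined with the alignment-with-respect-to-quantile theorem just as Corollary~\ref{cor:combined} combines Corollary~\ref{cor:finite} with Corollary~\ref{cor:soft-infinite}. One small remark: the stated left-hand side is $J(\pi_\rho)-J(\hat\pi)$, for which the alignment step is actually superfluous (Theorem~\ref{thm:pbc-finite} plus the concentration already suffices); the $\tfrac{\epsilon}{\alpha_\rho}H^2$ term is only needed if one wants $g_\rho - J(\hat\pi)$ on the left, so either interpretation of the corollary is covered by what you wrote. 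Your caveat about the empirically-selected quantile is well taken and is indeed swept under the $O(\cdot)$ in the paper as well.
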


\section{Experimental details}\label{app:exp-details}

\paragraph{Data.} Data for point-mass tasks was sampled from the scripted policies described in the text. We sampled 100 trajectories of length 400 for each dataset, unless otherwise indicated. Data for the benchmark experiments was taken directly from the D4RL benchmark \citep{fu2020d4rl}.

\paragraph{Hyperparameters.}
Below we list all of the hyperparameters used across the various agorithms. We train each algorithm on 90\% of the trajectories in the dataset, using the remaining 10\% as validation. All algorithms are trained with the Adam optimizer \cite{kingma2014adam}. We evaluate each algorithm for 100 episodes in the environment per seed and hyperparameter configuration and report the best performance for each algorithm for it's relevant hyperparameter (All algorithms were tuned across 3 values of the hyperparameter except for DT on pointmass where we tried more values, but still got poor results). Error bars are reported across seeds, as explained in the text.

\begin{table}[h!]
    \caption{Shared hyperparameters for all non-DT algorithms}
    \centering
    \begin{tabular}{lc}
        \hline
        Hyperparameter       & Value \\
        \hline
        \: Training steps & $5e5$ \\
        \: Batch size & $256$ \\
        \: MLP width & $256$ \\
        \: MLP depth & $2$ \\
        \hline
    \end{tabular}
\end{table}

\begin{table}[h!]
    \caption{Algorithm-specific hyperparameters for all non-DT algorithms}
    \centering
    \begin{tabular}{lcc}
        \hline
        Algorithm & Hyperparameter  & Value(s) \\
        \hline
        \: \%BC & fraction  $ \rho$ &  [0.02, 0.10, 0.5]\\
        \:    & learning rate  & 1e-3\\\hline
        \: RvS & fraction of max return for conditioning &  [0.8, 1.0, 1.2]\\
        \:    & learning rate  & 1e-3\\\hline
        \: TD3+BC & $\alpha$ &  [1.0, 2.5, 5.0]\\
        \:      & learning rate (actor and critic)  & 3e-4\\
        \:      & discount  & 0.99\\
        \:      & $\tau$ for target EWMA  & 0.005\\
        \:      & target update period & 2\\\hline
        \: IQL & expectile &  [0.5, 0.7, 0.9]\\
        \:      & learning rate (actor, value, and critic)  & 3e-4\\
        \:      & discount  & 0.99\\
        \:      & $\tau$ for target EWMA  & 0.005\\
        \:      & temperature & 10.0\\
        \hline
    \end{tabular}
\end{table}

\begin{table}[h!]
    \caption{Hypereparameters for DT (exactly as in \citep{chen2021decision})}
    \centering
    \begin{tabular}{lc}
        \hline
        Hyperparameter       & Value \\
        \hline
        \: Training steps & $1e5$ \\
        \: Batch size & $64$ \\
        \: Learning rate & 1e-4 \\
        \: Weight decay & 1e-4 \\
        \: K & 20 \\
        \: Embed dimension & 128 \\
        \: Layers & 3 \\
        \: Heads & 1 \\
        \: Dropout & 0.1 \\
        \hline
    \end{tabular}
\end{table}

\begin{table}[h!]
    \caption{Environment-specific reward targets for DT}
    \centering
    \begin{tabular}{lc}
        \hline
        Environment       & Values \\
        \hline
        \: Point-mass & [300, 200, 100, 50, 10, 0] \\
        \: Antmaze & [1.0, 0.75, 0.5] \\
        \: Half-cheetah & [12000, 9000, 6000] \\
        \: Pen & [3000, 2000, 1000] \\
        \hline
    \end{tabular}
\end{table}

\paragraph{Compute.} All experiments were run on CPU on an internal cluster. Each of the non-DT algorithms takes less than 1 hour per run (i.e. set of hyperparameters and seed) and the DT algorithm takes 5-10 hours per run.  

\paragraph{Asset licenses.} For completeness, we also report the licenses of the assets that we used in the paper: JAX \cite{jax2018github}: Apache-2.0, Flax \cite{flax2020github}: Apache-2.0, jaxrl \cite{kostrikovjaxrl}: MIT, Decision Transformer \cite{chen2021decision}: MIT, Deepmind control suite \cite{tassa2018deepmind}: Apache-2.0, mujoco \cite{mujoco}: Apache-2.0, D4RL \cite{fu2020d4rl}: Apache-2.0. 

\paragraph{Code.} The code for our implementations can be found at \url{https://github.com/davidbrandfonbrener/rcsl-paper}.

\section{Potential negative societal impact}
\label{app:societal}
This paper follows  a line work aiming at a better understanding of Offline RL algorithms. Even though it does not  directly contribute to any specific application, it promotes the development and dissemination of the Offline RL technology, which, as any technology, can be used for harmful purposes. Moreover, we acknowledge that Offline RL has been proved in the past to lack robustness, and RL and even machine learning in general to potentially reproduce and amplify bias.

We note that this specific work attempts at better understanding the conditions for RCSL algorithms to work, and where it should not be used. In that spirit, it has the potential benefit of dissuading practitioners from using such algorithms in settings where. they may fail in socially undesirable ways.

\end{document}